\documentclass{article}

\usepackage{arxiv}
\usepackage[utf8]{inputenc} 
\usepackage[T1]{fontenc}    
\usepackage{hyperref}       
\usepackage{url}            
\usepackage{booktabs}       
\usepackage{amsfonts}       
\usepackage{nicefrac}       
\usepackage{microtype}      
\usepackage{lipsum}		
\usepackage{graphicx}
\usepackage{natbib}
\usepackage{doi}

\usepackage{amsmath}
\usepackage{amssymb}
\usepackage{amsthm}
\usepackage{amsfonts}
\usepackage{pdfpages}
\usepackage{adjustbox}
\usepackage{booktabs}
\usepackage{algorithm}
\usepackage{algorithmic}

\newcommand{\myfls}{SAPE-FL }
\newcommand{\myfl}{SAPE-FL}

\newtheorem{theorem}{Theorem}
\newtheorem{assumption}{Assumption}
\newtheorem{lemma}{Lemma}

\usepackage{amsmath,amsfonts,bm}

\def\eqref#1{equation~\ref{#1}}

\def\1{\bm{1}}

\DeclareMathAlphabet{\mathsfit}{\encodingdefault}{\sfdefault}{m}{sl}
\SetMathAlphabet{\mathsfit}{bold}{\encodingdefault}{\sfdefault}{bx}{n}

\DeclareMathOperator*{\argmin}{arg\,min}

\usepackage{array}

\title{Similarity-Aware Personalized Federated Learning in Heterogeneous Environments}

\author{
	{Arun Kumar A V\thanks{Mail Correspondence: \texttt{a.anjanapuravenkatesh@deakin.edu.au}}$\:\:$, Sunil Gupta,  Dang Ngyuen, Bao Duong, Dat Phan Trong}  \\
	Applied Artificial Intelligence Initiative (A$^2$I$^2$)\\
	 Deakin University, Australia \\
}

\date{}

\renewcommand{\headeright}{}
\renewcommand{\undertitle}{}
\renewcommand{\shorttitle}{SAPEFL: Similarity-Aware Personalized Federated Learning}

\hypersetup{
pdftitle={SAPEFL-Similarity-Aware Personalized Federated Learning},
pdfsubject={SAPEFL},
pdfauthor={Arun Kumar A V},
pdfkeywords={Machine Learning, Heterogenous Environments, Federated Learning},
}

\begin{document}
\maketitle

\begin{abstract}
	 Federated Learning (FL) allows decentralized clients to train models collaboratively while preserving data privacy. However, distribution mismatch  across clients often leads to poor global generalization and degraded local client-level performance. In such scenarios, some of the clients with their local models trained solely on local data may perform better than the globally learnt model, thus nullifying the benefits of collaborative federated learning. To address this, we propose SAPE-FL (Similarity-Aware Personalized Federated Learning), a novel personalization framework that anchors each client's model to both the global model and a similarity-weighted peer averaged model. By incorporating dynamic, client-specific regularization based on both model similarity and output similarity, SAPE-FL adaptively balances global knowledge transfer and peer collaboration while filtering out dissimilar clients. This dual anchoring mitigates negative transfer and enhances robustness in heterogeneous settings. We theoretically analyze our algorithm establishing its convergence guarantees and empirically show that SAPE-FL outperforms state-of-the-art methods under high statistical heterogeneity and low client data regimes.
\end{abstract}

\keywords{Machine Learning \and Federated Learning \and Similarity-aware FL \and Heterogenous Environments}

\section{Introduction\label{sec:introduction}}
With the rapid proliferation and growing computational power of edge devices such as smartphones, wearable sensors, and IoT devices, there has been a shift towards training machine learning models directly on decentralized devices rather than centralizing data in one location. Federated Learning (FL) \cite{fedavg_mcmahan2017communication} enables multiple clients to collaboratively train a global model without sharing raw data, thereby preserving privacy. The decentralized and privacy-preserving nature of FL is appealing across various domains 
including healthcare \cite{fed_covid}, 
finance \cite{fedfinance1_long2020} 
and edge computing \cite{edgecomp1_abreha2022}. 
Compared to traditional distributed learning approaches (Alternating Direction Method of Multipliers (ADMM) in \cite{admm_boyd}), FL methods have shown superior scalability and communication efficiency.

While traditional FL methodologies have demonstrated significant promise, real-world deployments often face substantial challenges due to heterogeneity \cite{fedchallenges_li2020}, including statistical (diverse data distributions across clients), model (variations in model training methods), and communication heterogeneity (inconsistent network conditions and resource constraints). Among these, statistical heterogeneity (distribution mismatch, henceforth referred to as non-IID) across clients poses a significant challenge, often degrading global model performance. Standard FL methods such as FedAvg \cite{fedavg_mcmahan2017communication}, originally designed for IID data, fail to generalize effectively across diverse client populations \cite{zhao2018federated}, 
motivating the development of personalization schemes that better align models with local client data.

To address data heterogeneity and personalization, several works have reformulated the optimization objectives of traditional FL methods. FedProx \cite{fedprox_li2020federated} adds a proximal term to FedAvg to limit the divergence of local updates from the global model. However, as noted in \cite{jiang2019improving}, optimizing solely for global accuracy can hinder effective personalization at the client level. To overcome this, personalized federated learning (PFL) \cite{pfl_sim2019,ditto_fl} has emerged as a paradigm that balances global collaboration with client-specific adaptation. Rather than relying on a single global model, PFL allows each client to tailor its model to local data and preferences \cite{survey_kulkarni2020}, typically following a two-stage strategy: global training followed by local fine-tuning.
However, existing PFL methods employ static regularization weights \cite{perfedavg_fallah2020personalized,ditto_fl}  or fixed similarity-based aggregation \cite{fedacs_chen2024personalized}, which \textit{fail to account for dynamic variations in client drift, data distribution, and model reliability over training rounds. This rigidity can lead to sub-optimal personalization or even negative transfer, particularly in settings with highly non-IID data.}

Motivated by the limitations of static personalization and uniform aggregation strategies, we propose \textbf{S}imilarity-\textbf{A}ware \textbf{PE}rsonalized \textbf{F}ederated \textbf{L}earning (\textbf{\myfl}), a novel framework that jointly optimizes global collaboration and client-specific adaptation. \myfls introduces a two-level anchoring mechanism, where each client aligns its model with both the global model and a similarity-weighted average of peer models. This dual anchoring is guided by client-specific adaptive regularization, computed using model output and weight-based similarity. Our measure combines both model output and model weight-based similarities, making it robust to structural and functional differences across clients. While weight-based similarity captures structural alignment, output similarity identifies functionally similar models despite parameter differences. To construct such reliable anchors, each client filters out dissimilar peer models using a similarity threshold, ensuring that only relevant peer information influences personalization. On the server side, \myfls performs similarity-weighted aggregation of client model updates, where each client's contribution is scaled by its alignment with peer consensus. This suppresses noisy or misaligned updates and enhances quality of the global model. 

We evaluate \myfls against state-of-the-art methods across diverse tasks, including synthetic data classification, human activity recognition and image classification under non-IID or statistical heterogeneity. Our empirical results demonstrate that \myfls framework consistently outperforms baselines, especially in settings with high statistical heterogeneity and limited client data availability. These findings highlight that our novel similarity-based personalization and aggregation strategy effectively personalizes federated learning to the unique characteristics of each client.

The key contributions of this paper are mentioned below.
\begin{itemize}
	
	\item We propose \myfl, a novel similarity-aware personalized FL framework that anchors client models to both global and peer-averaged models, and uses an aggregation scheme that scale client updates based on their coherence with peers and prevent negative transfer.
	
	\item We design an adaptive personalization scheme that computes client-specific regularization coefficients based on model output and weight similarity, enabling selective alignment with the global model and peer models.
	
	\item We provide theoretical insights into the convergence behavior and performance guarantees of our method. 
	
	\item We empirically validate \myfl's effectiveness on diverse datasets under high statistical heterogeneity.

\end{itemize}

Through these innovations, \myfls advances the state-of-the-art in personalized federated learning by jointly enabling global knowledge sharing and client-specific adaptation via similarity-aware optimization and aggregation.


\section{Related Work\label{sec:background}}
Recently, Federated Learning (FL) has gained traction in the research community resulting in a large and growing body of literature \cite{fedsurvey1_nevrataki}.
Federated learning, first introduced in \cite{fedavg_mcmahan2017communication}, enables decentralized model training while preserving data privacy. The FedAvg algorithm proposed allows clients to perform several rounds of local stochastic gradient descent (SGD) \cite{sgd_ruder2016overview} before synchronizing with server, reducing communication overhead compared to centralized training. Since its inception, FL has received growing attention 
leading to advancements in communication efficiency \cite{konevcny2016federated}, 
privacy guarantees \cite{agarwal2018cpsgd}, cryptographic protections \cite{bonawitz2017practical}, and optimization under resource constraints \cite{wang2019adaptive}. 
 
Despite strong theory and empirical performance \cite{li2019convergence}, FedAvg struggles under statistical heterogeneity. \cite{zhao2018federated} showed that non-IID data can degrade performance due to conflicting gradient updates. To mitigate these, adaptive FL methods have been proposed. \cite{wang2019adaptive} adjusted local update frequencies based on client resources, particularly suited for edge environments. LoAdaBoost \cite{loadaboosthuang2020} selectively continues training for under performing clients, reducing overhead and improving efficiency. Gradient-based clustering methods, such as CFL \cite{cfl_sattler2020clustered}, 
form clusters based on update similarity to mitigate heterogeneity.

\subsection{Adaptive Client Selection Methods} Optimizing client selection is crucial in FL to reduce communication and computation costs. \cite{chen2022optimal} proposed algorithms for efficient client participation, though without fully addressing heterogeneity. FedNorm \cite{fednorm_zhao2022} prioritizes clients with high informational value, improving communication efficiency. FedMarl \cite{fedmarl_zhang2022}, inspired by multi-agent reinforcement learning, dynamically selects clients based on runtime performance. CSFedAvg \cite{csfedavg_zhang2021client} improves convergence by favoring clients with less non-IID data. RIPFL \cite{qin2023reliable}, leveraging Dempster–Shafer theory \cite{sensoy2018evidential}, quantifies client reliability for robust selection.

\textbf{Multi-task and Meta-learning in Federated Learning} Multi-task and meta-learning approaches have gained traction in FL due to their ability to handle heterogeneous data distributions. MOCHA \cite{mocha_smith2017} jointly learns client-specific models and a task-similarity matrix to capture inter-client relationships. VIRTUAL \cite{virtual_corinzia2019variational} models inter-task variability and dependency through a variational multi-task framework. \cite{jiang2019improving} established connections between FedAvg and model-agnostic meta-learning (MAML) \cite{finn2017model}, which inspired methods like Per-FedAvg \cite{perfedavg_fallah2020personalized} that integrate meta-learning objectives into FL training. Other works include ARUBA \cite{aruba_khodak2019adaptive}, which improves gradient-based meta-learning, and \cite{peterson2019private}, which leverages mixture-of-experts (MoE) \cite{moe1_nowlan1990evaluation} 
for personalized model interpolation.

\subsection{Personalized Federated Learning Approaches} Personalization has become a core focus in FL due to the challenges posed by non-IID client data. Approaches range from local fine-tuning \cite{mansour2020three} 
to advanced decomposition-based methods \cite{fedper_arivazhagan2019federated,yin2025personalized} 
and clustering-based solutions \cite{ghosh2020efficient}.
The recently proposed LCFed \cite{zhang2025lcfed} introduces an efficient clustering framework that improves performance and communication efficiency in heterogeneous environments by forming compact client clusters using cosine-based distance metrics. Hypernetwork-based methods like PeFLL \cite{pefll_scott2023} generate personalized models in a single forward pass using learnt embeddings, while FedRod \cite{fedrod_chen2021bridging}, Scaffold \cite{scaffold_karimireddy2020scaffold}, and MOON \cite{moon_li2021model} impose regularization or similarity constraints to improve personalization during local updates. FedPAC \cite{fedpac_xu2023} aligns local and global feature embeddings, though it is most effective under label distribution shift. Model interpolation remains a prominent technique: FedProx \cite{fedprox_li2020federated} adds a proximal term to address client drift; Ditto FL \cite{ditto_fl} explicitly separates global and personalized models with fixed regularization. FedACS \cite{fedacs_chen2024personalized} incorporates attention-weighted peer model aggregation, while FedPer \cite{fedper_arivazhagan2019federated} separates personalized (top) layers and globally shared (base) layers in a neural network architecture.

Despite advancements, existing methods struggle in handling client heterogeneity and personalization \cite{yang2024fedas}. Many rely on static regularization, inefficient aggregation or clustering, which fail under dynamic data shifts. \myfls addresses this by anchoring client models to both global and peer-averaged models, using a hybrid similarity measure based on model weights and outputs. This enables robust peer matching and aggregation, ensuring effective personalization.

\section{Mathematical Preliminaries\label{sec:math_background}}
Traditional FL methods \cite{fedavg_mcmahan2017communication,fedavg_ft_collins2022} aim to collaboratively train a global model by aggregating local updates from multiple clients, without requiring each client's sensitive data to be pooled at a central location. Consider a FL setting with $C$ distributed clients, each holding data $\mathcal{D}_c$ drawn from a local distribution given as $\mathcal{D}_c = \{(\mathbf{x}_j^c, y_j^c)\}_{j=1}^{n_c}$, where $\mathbf{x}_j^c \in \mathbb{R}^{\text{dim}}$ are input features  and $y_j^c \in \{1, \dots, \mathcal{C}\}$ are class labels. The server maintains a global model $\theta_G$ which is iteratively updated based on client contributions. The objective in classical FL \cite{fedavg_mcmahan2017communication} is to learn a single global model $\theta_G$ that minimizes the weighted sum of local empirical risks:
\begin{equation} \label{eq:global_objective}
	\min_{\theta_G} \mathcal{L}(\theta_G) = \sum_c p_c \mathcal{L}_c(\theta_G)
\end{equation}
where $\mathcal{L}_c(\theta_G)$ denotes the local loss at client $c$, and $p_c$ is the participation probability assigned to each client, satisfying $\sum_c p_c = 1$ and $p_c \ge 0$. In the canonical example of FedAvg, $p_c$ is typically set proportional to the client's data volume: $p_c = \frac{n_c}{\sum_c n_c}$. This formulation assumes that all clients contribute equally to a single global objective, which may not be appropriate in heterogeneous (non-IID) settings.

\subsection{Statistical heterogeneity} Statistical heterogeneity in FL can be perceived in various forms. For instance, in a handwritten number recognition task, clients may have: (i) differing label distributions $p_i(y) \neq p_j(y) \; \forall i, j \in {C}$, or (ii) differing feature distributions (different writing styles - stroke/thickness)  $p_i(\mathbf{x}) \neq p_j(\mathbf{x})$ despite sharing the same label distribution $p_i(y) = p_j(y)$. Such heterogeneity can significantly degrade global model performance. To address this, personalized federated learning (PFL) methods aim to learn client-specific models $\theta_c$ that leverage global knowledge while adapting to local data.

The personalized optimization objective is defined as:
\begin{equation} \label{eq:pfl_formulation} 
	\Theta^\star = \theta_1^\star, \dots, \theta_C^\star = \argmin_{\theta_1, \dots, \theta_C \in \Theta} \; \sum_{c=1}^C p_c \mathcal{L}_c(\theta_c),
\end{equation}
where \( \theta_c^\star \) is the optimal model for client $c$, and $ \mathcal{L}_c(\theta_c)$ denotes the local empirical risk:
\begin{equation} \label{eq:localrisk}
	\mathcal{L}_c(\theta_c) \triangleq \frac{1}{|\mathcal{D}_c|} \sum_{(\mathbf{x}, y) \in \mathcal{D}_c} \ell(\theta_c(\mathbf{x}), y),
\end{equation}
where \( \ell(\cdot, \cdot) \) is a loss criterion (cross-entropy) and dataset $\mathcal{D}_c$ consists of input-output pairs $(\mathbf{x}, y)$, where $\mathbf{x} \in \mathbb{R}^{\text{dim}}$.

To better handle statistical heterogeneity, proximal-based PFL methods \cite{fedprox_li2020federated,ditto_fl} extend this objective by introducing regularization terms that constrain the drift of local models from a shared global reference.

\section{Framework\label{sec:framework}}
The key objective of this paper is to develop a federated learning framework to facilitate collaborative learning across clients with non-IID data distributions $\mathcal{D}_{c}$. As mentioned in the \textit{Mathematical Preliminaries} section, we consider a personalized federated learning of $C$ clients in a heterogeneous environment, i.e., the data distribution across clients $\mathcal{D}_{c}$ is non-IID. Precisely, we aim to learn the optimal personalized client models $\Theta^\star$ (Eq. (\ref{eq:pfl_formulation})).

To efficiently address the challenges posed by statistical heterogeneity in federated learning setting, we propose \myfls framework, a novel \textbf{S}imilarity-\textbf{A}ware \textbf{PE}rsonalized \textbf{F}ederated \textbf{L}earning (SAPE-FL) framework. \myfls is designed to adaptively regularize and balance local personalization and global collaboration by introducing (i) dual anchoring of each client model to both the global model and a peer-averaged model, and (ii) similarity-aware aggregation of client updates at the server. The mathematical formulation of the
\myfls framework is given as:
\begin{equation} \label{eq:myfl_model}
	\Theta^\star = \argmin_{\theta_1, \cdots, \theta_C} \sum_c p_c \mathcal{L}_c(\theta_c) + \mathcal{S}_c^P (\theta_c, \bar{\theta}_c) + \mathcal{S}_c^G (\theta_c, \theta_G)
\end{equation}
where $\mathcal{L}_c(\theta_c)$ is the local empirical risk at client $c$ having a personalized model $\theta_c$, and $\mathcal{S}^G_c(\cdot,\cdot)$, $\mathcal{S}_c^P(\cdot,\cdot)$ are the client specific similarity-aware regularization (penalty) coefficients that enforce alignment with the global and peer-averaged models at any given round $t$, respectively. We discuss in detail $\mathcal{S}^G_c$, $\mathcal{S}_c^P, \bar{\theta}_c$ in the next sections. The global model $\theta_G$ is updated iteratively by aggregating client updates in a similarity-weighted manner. 

We present the complete workflow of the proposed \myfls framework in Algorithm \ref{alg:server} and Algorithm \ref{alg:client}.

\subsection{Dual Anchoring: Similarity-Aware Regularization}

To effectively balance personalization and global collaboration, \myfls introduces a dual anchoring mechanism. Each client optimizes its local model by anchoring simultaneously to the global model $\theta_G$ and to a peer-averaged model $\bar{\theta}_c$, constructed using a subset of similar clients. This dual anchoring strategy helps clients incorporate useful knowledge from both centralized and localized sources while avoiding influence from dissimilar or noisy models. The personalized client objective is defined as:
\begin{equation}
	\label{eq:personalized_client_loss}
	\min_{\theta_c} \mathcal{L}_c(\theta_c) + \mathcal{S}_c^P(\theta_c, \bar{\theta}_c) + \mathcal{S}_c^G(\theta_c, \theta_G)
\end{equation}
The similarity-aware regularization coefficients $\mathcal{S}_c^G$, $\mathcal{S}_c^P$ are defined as a convex combination of (i) model output-based and (ii) weight-based similarity. We use this hybrid similarity metric to enhance the robustness of \myfls to behavioral differences and structural variations among clients. We define the coefficients $\mathcal{S}_c^G$, $\mathcal{S}_c^P$ as: 
\begin{equation}
	\begin{aligned}
	\label{eq:def_reg_strength}
	\mathcal{S}_c^P \triangleq \delta \cdot \vartheta(f(\theta_c, \mathcal{D}_c), f(\bar{\theta}_c, \mathcal{D}_c)) + (1-\delta) \cdot \vartheta (\theta_c, \bar{\theta}_c)\\
	\mathcal{S}_c^G \triangleq \delta \cdot \vartheta(f(\theta_c, \mathcal{D}_c), f(\theta_G, \mathcal{D}_c)) + (1-\delta) \cdot \vartheta (\theta_c, \theta_G) 
	\end{aligned}
\end{equation}
where $\delta \in [0,1]$ is a tunable influence factor that balances emphasis between output similarity (model predictions) and structural similarity (model weights) and $\vartheta(\cdot, \cdot)\rightarrow[0,1]$ is a positive function defined for any two inputs $\mathcal{X}, \mathcal{X}'$ as:
\begin{equation}
	\vartheta(\mathcal{X}, \mathcal{X}') = 1-\max(0, \cos(\mathcal{X}, \mathcal{X}'))
\end{equation}

In the definitions of $\mathcal{S}_c^G$ and $\mathcal{S}_c^P$, the first term penalizes discrepancies in \textit{model outputs} by measuring the alignment between the predictive behaviors $f(\theta,\mathcal{D}_c)$ of the two models evaluated on the same client dataset $\mathcal{D}_c$. The second term imposes a penalty based on \textit{structural alignment}, capturing directional consistency between the model parameters. Unlike conventional parameter-distance regularization schemes \cite{fedprox_li2020federated,zhang2025lcfed}, which explicitly enforce Euclidean proximity between model parameters, the proposed similarity-aware penalties do not induce direct parameter contraction. Instead, they promote functional alignment and directional consistency with the global and peer-averaged anchors, allowing structurally distinct yet behaviorally consistent personalized models to coexist.

Further, the similarity-based regularization coefficients adaptively regulate the trade-off between global collaboration and local personalization. Larger regularization coefficients strengthen the alignment with corresponding anchor model (either global or peer-averaged), encouraging knowledge sharing among aligned clients, while smaller values relax the regularization effect and enable stronger local adaptation, thereby mitigating the risk of negative transfer.

\subsection{Peer Model Filtering and Averaging}

To construct the peer-averaged model $\bar{\theta}_c^t$, each client evaluates the relevance of peer client models by computing similarity scores and filters out dissimilar peers based on a predefined threshold $S_{\text{min}}$. The parameter $S_{\text{min}}$ is carefully selected to balance inclusiveness and robustness. It ensures only peers that are sufficiently aligned contribute to the personalized anchor. The peer-averaged model is given by:
\begin{equation} \label{eq:avg_peer_model}
	\bar{\theta}_c = \frac{1}{\sum_j S_{cj} \cdot \mathbb{I}[S_{cj} > S_{\text{min}}]} \sum_j S_{cj} \cdot \theta_j \cdot \mathbb{I}[S_{cj} > S_{\text{min}}]
\end{equation}
where $\mathbb{I}[\cdot]$ is the indicator function. Analogous to Eq. (\ref{eq:def_reg_strength}), the similarity score $S_{cj}$ between client $c$ and peer $j$ is given as: 
\begin{equation} \label{eq:combined_sim}
	S_{cj} = \delta \cdot \tilde{S}_{cj} + (1 - \delta) \cdot \hat{S}_{cj}
\end{equation}
where $\tilde{S}_{cj}$ is the \textit{output similarity} term measuring the alignment between predictive behaviors of the two models: 
\begin{equation}
	\label{eq:output_sim}
	\tilde{S}_{cj} = 1-\vartheta \big(f(\theta_c, \mathcal{D}_c), f(\theta_j, \mathcal{D}_c)\big)
\end{equation}
where \( f(\theta, \mathcal{D}_c) \) denotes the outputs of model $\theta$ on dataset \( \mathcal{D}_c \) and $\hat{S}_{cj}$ is the \textit{weight similarity} measuring the structural alignment between the model parameters defined as:
\begin{equation}
	\label{eq:model_sim}
	\hat{S}_{cj} = 1-\vartheta(\theta_c, \theta_j)
\end{equation}

In Eq. (\ref{eq:combined_sim}), $\tilde{S}_{cj}$ term captures whether the peer produces similar predictions on the client’s data, whereas $\hat{S}_{cj}$ provides an additional information based on the proximity of model weights, which is useful when output similarity alone is noisy or insufficient (especially during early rounds). This scheme enables each client to construct a robust and personalized peer anchor by leveraging only those peers that are behaviorally and structurally aligned, thereby improving personalization and robustness to noisy or adversarial updates. 


\subsection{Similarity-Weighted Aggregation at Server}

Following client-level optimization of personalized models, each client computes its model update $\Delta_c^t$ as 
\[\Delta_c^t= \tilde{\theta}_c^{t} - {\theta}_G^t\] 
where $\tilde{{\theta}}_c^t$ is the client model initialized with $\theta_G$ and updated locally for $E$ epochs. The model update $\Delta_c^t$ is transmitted back to the server along with the aggregate peer similarity $s_c = \sum_j S_{cj}$ and the optimal client model $\theta_c^{t+1}$. The global model $\theta_G^t$ is updated via a similarity-weighted aggregation:
\begin{equation}
	\label{eq:global_update}
	\theta_G^{t+1} = \theta_G^t + \frac{1}{\sum_c s_c^t} \sum_c s_c^t \cdot \Delta_c^t
\end{equation}
This aggregation gives more weight to clients whose models align with their peers, thereby encouraging updates that represent consensus learning behavior. Clients with lower scores are down-weighted, thus suppressing noisy updates.

This end-to-end scheme of dual anchoring through similarity-aware regularization, selective peer model averaging, and similarity-weighted aggregation makes \myfls robust against data heterogeneity and misaligned signals.

\subsection{Algorithmic Workflow of \myfls Framework}

We now present the complete workflow of our method. The server-side procedure at round $t$ is in Algorithm \ref{alg:server} and the client-side procedure for personalized federated learning at client $c$ is in Algo. \ref{alg:client}. In each round, a series of coordinated steps between the server and clients are performed to progressively refine both global and personalized client models. 

The server begins each round by sampling a subset of clients according to the participation ratio $r$ and broadcasting the current global model $\theta_G^t$ along with peer models, to the selected clients. Each participating client initializes its local model $\tilde{\theta}_c^t \gets \theta_G^t$, and updates it locally for $E$ epochs using its private dataset $\mathcal{D}_c$. Following this, each client computes the combined similarity score $S_{cj}$ with its peers by computing output similarity $\tilde{S}_{cj}$ and weight similarity $\hat{S}_{cj}^t$. Models with similarity scores above threshold $S_{\min}$ are then used to compute a peer-averaged model $\bar{\theta}_c^t$ (Eq. (\ref{eq:avg_peer_model})).

With both the global and peer anchors, each client solves a personalized optimization problem that balances empirical risk minimization with regularization toward the global and the peer-averaged model. The regularization coefficients $\mathcal{S}_c^P$ and $\mathcal{S}_c^G$ are computed based on the client's alignment with the respective anchors. After personalized training, each client retains its personalized model and computes the model update $\Delta^t_c$ which gets transmitted back to the server alongside the updated model ${\theta}^{t+1}_c$ and similarity scores $s_c$.

\begin{algorithm}[t]
	\caption{Server-Side Procedure (Round $t$)} \label{alg:server}
	\begin{algorithmic}[1]
		\STATE \textbf{Input:} Client participation ratio $r$
		
		\STATE Randomly sample $C$ clients according to ratio $r$
		
		\FOR {each client $c \in C$ \textbf{in parallel}}
		\STATE Send global model $\theta_G^t$ and peer models $\theta_j^{t}$ (Algo. \ref{alg:client})
		
		\STATE Receive model $\theta_c^{t+1}$, update $\Delta_c^t$, similarity score $s_c^t$ 
		\ENDFOR
		\STATE Update global model using similarity scores: $
		\theta_G^{t+1} = \theta_G^t + \frac{1}{\sum_{c} s_c^t} \sum_{c} s^t_c \cdot \Delta_c^t$
		\RETURN $\theta_G^{t+1}$ as the updated global model
	\end{algorithmic}
\end{algorithm}

\begin{algorithm}[t]
	\caption{Client-Side Procedure (Client $c$)} \label{alg:client}
	\begin{algorithmic}[1]
		\STATE \textbf{Input:} Global model $\theta_G^t$, Peer models $\{\theta_j^{t}\}$, Data $\mathcal{D}_c$
		\STATE \textbf{Parameters:} Similarity threshold $S_{\text{min}}$, Influence factor $\delta\!\!$
		
		\STATE Initialize local model: $\tilde{\theta}_c^t \gets \theta_G^t$
		\STATE Update $\tilde{\theta}_c^t$ on $\mathcal{D}_c$ for $E$ epochs
		\FOR {each peer model $\theta_j^{t}$}
		\STATE Compute output similarity $\tilde{S}_{cj}^t$ as per Eq. (\ref{eq:output_sim})
		\STATE Compute weight similarity: $\hat{S}_{cj}^t$ as per Eq. (\ref{eq:model_sim}) 
		\STATE Combined similarity score: $S_{cj}^t = \delta \tilde{S}_{cj}^t + (1 - \delta) \hat{S}_{cj}^t$
		\ENDFOR
		
		\STATE Build peer-averaged model $\bar{\theta}_c^t$ as per Eq. (\ref{eq:avg_peer_model})
		
		\STATE Compute $\mathcal{S}_c^{G}, \mathcal{S}_c^{P}$ as per Eq. (\ref{eq:def_reg_strength}) 
		
		\STATE Find the optimal model $\theta_c^{t+1}$ that minimizes the loss: $		\argmin_{\theta_c}\mathcal{L}(\theta_c) + \mathcal{S}^{P}_c(\theta_c, \bar{\theta}_c^t)+ \mathcal{S}_c^{G}(\theta_c, \theta_G^t)$
		\STATE Compute model update $\Delta_c^t =  \tilde{\theta}_c^t - \theta_G^t$		
		\STATE Compute aggregate similarity score: $s_c^t = \sum_j S_{cj}^t$
		
		\RETURN model $\theta_c^{t+1}$,  update $\Delta_c^t$, scores $s_c$ to server
	\end{algorithmic}
\end{algorithm}

At the server, these similarity-weighted updates are aggregated to refine the existing global model $\theta_G^{t}$. This similarity-aware aggregation ensures that clients whose updates are well-aligned with their peers (higher $s_c$) contribute strongly to the global update, while noisy or dissimilar updates are naturally down-weighted. The refined global model $\theta_G^{t+1}$ is then shared in the next round, continuing the cycle of personalized optimization and collaborative learning.

\section{Convergence Analysis of \myfls Method} \label{sec:convergence}
In this section, we analyze the theoretical aspects of our proposed \myfls algorithm. Our goal is to understand the convergence behavior of the global model and personalized models under the proposed similarity-weighted personalization and aggregation scheme. We aim to formally prove a per-round improvement bound for the global model objective using similarity-weighted model updates from participating clients. For analysis, we consider the setting where each client $c$ performs local training initialized at the global model $\theta_G$, and constructs its model update $\Delta_c$ based on local and peer-regularized learning. The server then refines the global model using the client model updates received. We first state some key assumptions required for the convergence analysis of our proposed \myfls framework:

\begin{assumption}[\textbf{Smoothness}]\label{assum:smooth} Each client objective $\mathcal{L}_c(\theta)$ is $L$-smooth: for all $\theta,\theta'$, $\|\nabla \mathcal{L}_c(\theta_c) - \nabla \mathcal{L}_c(\theta_c^\prime)\| \le L\|\theta_c-\theta_c^\prime\|$. 
\end{assumption}

\begin{assumption}[\textbf{Strong $\mu$-Convexity}] \label{assum:convexity} Each client's local objective function $\mathcal{L}_c$ is strongly $\mu$-convex, formally: \[ \mathcal{L}_c(\theta_c) \geq \mathcal{L}_c(\theta_c^\prime) + \langle \nabla \mathcal{L}_c(\theta_c^\prime), \theta_c - \theta_c^\prime  \rangle + \frac{\mu}{2} \|\theta_c-\theta_c^\prime\|^2 \:\:\:\:\;\forall \theta_c, \theta_c^\prime\]
\end{assumption}

\begin{assumption}[\textbf{Gradient Bounded Variance}]\label{assum:grad_bound} The gradients on each client  c (mini-batch sampling) have bounded variance. For some $\sigma^2\ge0$, $\mathbb{E}[\|\nabla \mathcal{L}_c(\theta; \xi) - \nabla \mathcal{L}_c(\theta)\|^2] \le \sigma^2$ $\forall c,\theta$, where $\xi$ indexes a random data sample.
\end{assumption}

Assumptions \ref{assum:smooth}, \ref{assum:convexity}, and \ref{assum:grad_bound} are standard in the FL theory found in the literature \cite{pfedme_t2020personalized,li2019convergence,fedprox_li2020federated}. Assumption \ref{assum:smooth} is used to bound the improvement in loss after each update. Assumption \ref{assum:convexity}, commonly adopted in FL theory, is required to establish linear convergence rates under appropriate conditions. Assumption \ref{assum:grad_bound} bounds the gradients and ensures that the noise introduced in local updates does not cause divergence. Under the assumptions stated above, we prove the descent condition of the expected global objective using the following theorem. 

\begin{theorem}[\textbf{One-Round Global Model Improvement}]
	\label{thm:descent}
	Let $\mathcal{L}_c(\theta_c)$ be the client's $L$-Lipschitz smooth objective with bounds on its gradients. Let the similarity scores $s^t_c$ also be bounded and non-negative. After each client trains locally, the similarity-aware aggregation satisfies the inequality: 
	\[ \sum_c \frac{{s}_c^t}{\sum_j {s}_j^t} \nabla \mathcal{L}(\theta_G^t)^\top \Delta_c^t \leq -\rho \|\nabla \mathcal{L}(\theta_G^t)\|^2 \text{ for some } \rho>0\]
	Then the global model satisfies the descent condition:
	\[ \mathbb{E}[\mathcal{L}(\theta_G^{t+1})] \leq \mathcal{L}(\theta_G^t) - \rho \|\nabla \mathcal{L}(\theta_G^t)\|^2 + \frac{L}{2} \mathbb{E}[\|\theta_G^{t+1} - \theta_G^t\|^2]. \]
\end{theorem}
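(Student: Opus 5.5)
The plan is to run the standard descent-lemma argument for an $L$-smooth function, applied to the global objective $\mathcal{L}=\sum_c p_c\mathcal{L}_c$ at the update produced by the similarity-weighted aggregation. Each $\mathcal{L}_c$ is $L$-smooth by Assumption~\ref{assum:smooth}. Since the $p_c$ are nonnegative and sum to one, $\nabla\mathcal{L}$ is $L$-Lipschitz by the triangle inequality, so $\mathcal{L}$ is itself $L$-smooth. The descent lemma therefore gives, for any two points,
\[
\mathcal{L}(\theta_G^{t+1}) \le \mathcal{L}(\theta_G^t) + \nabla\mathcal{L}(\theta_G^t)^\top(\theta_G^{t+1}-\theta_G^t) + \frac{L}{2}\|\theta_G^{t+1}-\theta_G^t\|^2 .
\]
It is proved by writing $\mathcal{L}(\theta_G^{t+1})-\mathcal{L}(\theta_G^t)$ as the integral of $\nabla\mathcal{L}$ along the segment and bounding the deviation of the gradient using Lipschitzness.

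Next, substitute the server update of Eq.~(\ref{eq:global_update}), $\theta_G^{t+1}-\theta_G^t=\sum_c \frac{s_c^t}{\sum_j s_j^t}\Delta_c^t$. This needs $\sum_j s_j^t>0$; boundedness and nonnegativity of the $s_c^t$ make the weights a well-defined convex combination, and I will assume strict positivity of the sum to avoid the degenerate case. The linear term then becomes exactly the left-hand side of the hypothesis,
\[
\nabla\mathcal{L}(\theta_G^t)^\top(\theta_G^{t+1}-\theta_G^t)=\sum_c \frac{s_c^t}{\sum_j s_j^t}\nabla\mathcal{L}(\theta_G^t)^\top\Delta_c^t .
\]

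Finally, take expectations conditioned on $\theta_G^t$, over mini-batch sampling and client selection. The quantities $\mathcal{L}(\theta_G^t)$ and $\nabla\mathcal{L}(\theta_G^t)$ are deterministic given the past, so they pass through the expectation. The quadratic term remains as $\frac{L}{2}\mathbb{E}\|\theta_G^{t+1}-\theta_G^t\|^2$; Assumption~\ref{assum:grad_bound} together with bounded gradients and bounded scores ensures this expectation is finite. The linear term is then bounded by the hypothesized inequality, giving $-\rho\|\nabla\mathcal{L}(\theta_G^t)\|^2$ and hence the claimed bound.

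The main obstacle is interpreting the hypothesis correctly under randomness. Both $\Delta_c^t$ and $s_c^t$ are random, and they are correlated because $s_c^t$ depends on the locally trained models. The hypothesis is therefore used either almost surely or in conditional expectation. I would state it as holding in expectation, $\mathbb{E}\bigl[\sum_c \frac{s_c^t}{\sum_j s_j^t}\nabla\mathcal{L}(\theta_G^t)^\top\Delta_c^t\bigr]\le-\rho\|\nabla\mathcal{L}(\theta_G^t)\|^2$. Read that way, it applies directly after linearity of expectation, with no need to decouple the normalized weights from the updates, and this is all the argument requires.
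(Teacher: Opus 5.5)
Your proof is correct and follows the paper's own route. The paper likewise expands $\mathcal{L}$ at $\theta_G^{t+1}$ using $L$-smoothness and substitutes the similarity-weighted update of Eq.~(\ref{eq:global_update}) into the linear term. It then takes the expectation over client sampling and applies the assumed alignment inequality; as you did, it effectively uses that inequality in expectation. The paper additionally spends effort motivating that inequality heuristically via $\Delta_c^t\approx-\eta\nabla\mathcal{L}_c(\theta_G^t)$, and it writes the expansion with $\approx$ rather than the exact descent-lemma inequality you use.
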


The detailed proof of Theorem \ref{thm:descent} is provided in Appendix. The proof leverages the $L$-smoothness of the global objective $\mathcal{L}(\theta_G)$ and expands $\mathcal{L}(\theta_G^{t+1})$ using the standard Taylor expansion. The aggregated update scheme used in our proposed framework is shown to form a descent direction under the assumption that each client update $\Delta_c^t$ approximately aligns with the negative local gradient $-\eta\nabla \mathcal{L}(\theta_G^t)$ \cite{fedprox_li2020federated,ditto_fl}. The similarity scores act as attention weights, emphasizing aligned/useful updates and reducing the impact of noisy or misaligned clients. The derived inequality implies that the expected global loss decreases at each communication round, provided that client updates are sufficiently aligned and similarity weights are effective. 

Next, to understand the impact of peer-based collaboration in our personalized framework, we examine how aggregating peer models influences the statistical properties of each client’s personalized loss. Specifically, we consider the peer-averaged model $\bar{\theta}_c$. When peer models are aligned in behavior, their aggregation acts as a form of ensembling that retains unbiasedness and reduces variance in the resulting estimate. The following lemma formalizes this variance reduction, illustrating the statistical benefits of incorporating similar peer models into local personalization.


\begin{lemma}[\textbf{Peer-Average Variance Reduction}]
\label{lem:peer_avg_var}
Let $C_k$ be the set containing client $c$ and its similar peers $j$. Assume all peer models share the same mean, i.e.\ $\mathbb{E}[\theta_j^t]=\mathbb{E}[\theta_{j'}^t]$ for every $j,j'\in C_k$, and have covariance $\mathrm{Cov}(\theta_j^t)=\Sigma$. With the per-client estimation-error variance defined as $\upsilon=\mathrm{Tr}(\Sigma)$, the peer-averaged model $\bar{\theta}_c^t$ retains the same mean as the individual peers and enjoys reduced variance $\mathrm{Cov}(\bar{\theta}_c^t)=\frac{1}{|C_k|}\,\Sigma$. Consequently, the estimation-error variance of the peer average is  	$\bar{\upsilon}=\mathrm{Tr}(\mathrm{Cov}(\bar{\theta}_c^t))=\frac{1}{|C_k|}\,\upsilon$.
\end{lemma}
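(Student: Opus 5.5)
The plan is to treat the peer-averaged model as a uniform average over the set $C_k$, writing $\bar{\theta}_c^t = \frac{1}{|C_k|}\sum_{j\in C_k}\theta_j^t$. This is the case of Eq. (\ref{eq:avg_peer_model}) with equal similarity weights on the retained peers, or equivalently with weights treated as deterministic and roughly equal once filtered by $S_{\min}$. I will also use the implicit assumption that the peer estimation errors $\theta_j^t-\mathbb{E}[\theta_j^t]$ are mutually uncorrelated (independent local data and training noise across clients). The stated conclusion $\mathrm{Cov}(\bar{\theta}_c^t)=\Sigma/|C_k|$ cannot hold without this, so I will make it explicit at the start.

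\textbf{Step 1 (mean).} Let $m$ be the common mean $\mathbb{E}[\theta_j^t]$. By linearity of expectation,
\[
\mathbb{E}[\bar{\theta}_c^t]=\frac{1}{|C_k|}\sum_{j\in C_k}\mathbb{E}[\theta_j^t]=m,
\]
so the average is unbiased whenever each peer is. The same argument works for a general normalized weighting $w_j\ge 0$ with $\sum_j w_j=1$, provided the weights are not random.

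\textbf{Step 2 (covariance).} By bilinearity of covariance,
\[
\mathrm{Cov}(\bar{\theta}_c^t)=\frac{1}{|C_k|^2}\sum_{j,j'\in C_k}\mathrm{Cov}(\theta_j^t,\theta_{j'}^t).
\]
The uncorrelatedness assumption kills the cross terms with $j\neq j'$. Each diagonal term equals $\Sigma$, which leaves $|C_k|\Sigma/|C_k|^2=\Sigma/|C_k|$.

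\textbf{Step 3 (trace).} Linearity of the trace then gives $\bar{\upsilon}=\mathrm{Tr}(\Sigma)/|C_k|=\upsilon/|C_k|$. I will also note that this trace equals $\mathbb{E}\|\bar{\theta}_c^t-m\|^2$, which justifies calling it an estimation-error variance.

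\textbf{Main obstacle.} The difficulty is not the algebra but reconciling the lemma with the algorithm. The actual weights $S_{cj}$ and the indicator $\mathbb{I}[S_{cj}>S_{\min}]$ depend on the models themselves, so they are random and correlated with the $\theta_j^t$. I will handle this by conditioning on the selected set $C_k$ and on the weights, treating them as fixed. With general fixed normalized weights the same computation gives
\[
\mathrm{Cov}(\bar{\theta}_c^t)=\Big(\sum_j w_j^2\Big)\Sigma \preceq \Sigma,
\]
with equality to $\Sigma/|C_k|$ exactly when the weights are uniform. I will state this as a remark, so the lemma's formula is recovered in the uniform case and variance reduction holds in general. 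If the peers are instead positively correlated with coefficient $\rho$, the variance becomes $\bigl(\tfrac{1-\rho}{|C_k|}+\rho\bigr)\Sigma$; I will mention this briefly to show why the independence assumption is needed.
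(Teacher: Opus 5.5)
Your proof is correct and follows essentially the same route as the paper: linearity of expectation for the mean, then bilinearity of covariance with the cross terms vanishing because the peers are independent, then the trace. In both, the general weighted factor $\sum_j S_{cj}^2/(\sum_j S_{cj})^2$ reduces to $1/|C_k|$ under uniform weights; the paper derives the weighted factor first and then specializes, while you do the reverse. You also rightly make explicit the independence assumption, which the paper only invokes in the first line of its proof, and you note that uncorrelatedness suffices.

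One caution on your side remark. Conditioning on the data-dependent selection set and weights does not by itself preserve the common-mean, covariance-$\Sigma$ and uncorrelatedness hypotheses, since selecting peers by similarity to $\theta_c^t$ can induce correlation. That step is therefore really an extra modeling assumption that the weights are fixed or independent of the models, which is the same idealization the paper makes tacitly.
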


We provide the proof of Lemma \ref{lem:peer_avg_var} in Appendix. The proof leverages the independence and identical expectation of peer models under the cluster to show that the peer-averaged estimate $\bar{\theta}_c^t$ remains unbiased while reducing the variance by a factor of $\frac{1}{|C_k|}$. This is analogous to ensemble learning where combining multiple independent estimates improves robustness and accuracy. We show that the loss incurred by $\bar{\theta}_c^t$ is at most the average loss of individual peer models, further justifying its use as personalized anchor.

We study the convergence of the client objective that combines local loss minimization with regularization to align with the global and peer-averaged model. The similarity-weighted averaging and anchoring enables information sharing within client neighborhoods while preserving privacy and heterogeneity across the federation. The following theorem formalizes convergence of the client objective.

\begin{theorem}[\textbf{Convergence of Personalized Objective}] \label{thm:pers_obj}
	If each participating client solves the sub-problem given as:
	\[
	\min_{\theta_c}\mathcal{L}_c(\theta_c)\;+\;\mathcal{S}_c^P(\theta_c, \bar{\theta}_c^t)+\mathcal{S}_c^G(\theta_c, \theta_G^t),
	\]
	where $\bar{\theta}_c^t$ is computed via similarity scores $S\in [0,1]$ mentioned in Eq. (\ref{eq:combined_sim}) and filtering threshold $S_{\text{min}}$ to exclude dissimilar peers. Then the global objective (aggregated objective across all clients)
	\[
	\mathcal{L}(\theta_{\{1\cdots C\}})\!=\!\sum_c \mathcal{L}_c(\theta_c)+\sum_c\! \left({\mathcal{S}_c^P}(\theta_c, \bar{\theta}_c^t)\!+\!{\mathcal{S}_c^G}(\theta_c,\theta_G^t)\right)
	\]
	decreases monotonically and the method converges to a first-order stationary point. With appropriate step-size $\eta>0$, an $\epsilon$-approximate solution is reached in $\mathcal{O}\left(\frac{1}{|C_k|\epsilon^2}\right)$ communication rounds, where $|C_k|$ is the number of similar peers ($S_{cj}^t>S_{\text{min}}$) for a given client $c$.
\end{theorem}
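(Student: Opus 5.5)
The plan is a block-coordinate (alternating minimization) argument on the aggregated objective $\mathcal{L}(\theta_{\{1\cdots C\}})$, with the anchors $(\bar{\theta}_c^t,\theta_G^t)$ held fixed within a round, followed by a stochastic-gradient rate in which Lemma~\ref{lem:peer_avg_var} supplies the $1/|C_k|$ factor.

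\textbf{Step 1 (well-posedness and smoothness).} By Assumption~\ref{assum:smooth}, each $\mathcal{L}_c$ is $L$-smooth. The penalties $\mathcal{S}_c^P,\mathcal{S}_c^G$ take values in $[0,1]$, since $\vartheta\in[0,1]$ and they are convex combinations with weight $\delta\in[0,1]$. On the region where $\cos(\cdot,\cdot)>0$ and the parameters stay bounded away from the origin, the weight-similarity terms are smooth. I would restrict to a compact sublevel set where $\|\theta_c\|\ge r_0>0$ and assume the output map $f(\cdot,\mathcal{D}_c)$ is smooth. On that set the per-client objective $F_c(\theta_c)=\mathcal{L}_c(\theta_c)+\mathcal{S}_c^P(\theta_c,\bar{\theta}_c^t)+\mathcal{S}_c^G(\theta_c,\theta_G^t)$ is $\tilde L$-smooth with $\tilde L=L+L_S$. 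Here $L_S$ collects the curvature of $1-\cos$, which is of order $1/r_0^2$. The objective is also bounded below, because $\mathcal{L}_c\ge\mathcal{L}_c^\star$ by Assumption~\ref{assum:convexity} and the penalties are nonnegative.

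\textbf{Step 2 (monotone decrease).} Within round $t$, each client takes gradient steps on $F_c$ with $\eta\le 1/\tilde L$. The descent lemma then gives
\[F_c(\theta_c^{t+1})\le F_c(\theta_c^t)-\frac{\eta}{2}\|\nabla F_c(\theta_c^t)\|^2 .\]
Summing over $c$, this yields a decrease of $\mathcal{L}(\theta_{\{1\cdots C\}})$ for fixed anchors. The anchors $\bar\theta_c$ and $\theta_G$ do change between rounds. To keep monotonicity across rounds, I would control that change by the global descent of Theorem~\ref{thm:descent} and by the bounded, renormalized weights in Eq.~(\ref{eq:avg_peer_model}). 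The required statement is that the anchor drift perturbs the penalty terms by at most an $O(\eta^2)$ quantity, which is absorbed for small $\eta$. Telescoping the per-round decrease over $T$ rounds gives $\min_t\|\nabla F\|^2\le 2(F^0-F^{\inf})/(\eta T)$, and hence convergence to a first-order stationary point.

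\textbf{Step 3 (rate with variance).} With stochastic gradients, Assumption~\ref{assum:grad_bound} adds a term $\tilde L\eta^2\sigma_{\rm eff}^2/2$ per step. The gradient of the peer-anchor term depends on $\bar{\theta}_c^t$. By Lemma~\ref{lem:peer_avg_var} its noise contribution shrinks by a factor $1/|C_k|$, so I would argue that $\sigma_{\rm eff}^2\lesssim\sigma^2/|C_k|$ (treating the local component as also averaged through the peer coupling). The standard nonconvex SGD bound gives
\[\frac1T\sum_t\mathbb{E}\|\nabla F\|^2\le \frac{2\Delta_0}{\eta T}+\tilde L\eta\,\frac{\sigma^2}{|C_k|}.\]
Choosing $\eta\asymp\epsilon$ and requiring the right-hand side to be at most $\epsilon$ gives $T=O\big(1/(|C_k|\epsilon^2)\big)$ once constants are normalized in the paper's form.

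\textbf{Main obstacle.} The hardest step is Step 2's cross-round monotonicity. The anchors move, and $\mathcal{S}$ depends on them nonsmoothly through the $\max(0,\cdot)$ and the indicator threshold $S_{\min}$. I would first assume the active peer set $C_k$ stabilizes after finitely many rounds. On that set $\bar\theta_c$ is a smooth weighted average, and its drift can be bounded by $\eta$ times the gradient norms. A second delicate point is that the $1/|C_k|$ gain only applies to the peer-anchored noise, so the rate claim needs that identification stated as an explicit modeling assumption.
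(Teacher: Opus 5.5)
Your skeleton matches the paper's proof. You take a descent-lemma step on each client's penalized objective with the anchors frozen, sum over clients and telescope, then import Lemma~\ref{lem:peer_avg_var} to shrink the variance term by $1/|C_k|$. Your Step~1 is actually closer to the cosine-based definition of $\mathcal{S}_c^P,\mathcal{S}_c^G$ than the paper, which differentiates them as if they were quadratics $\mathcal{S}_c^G\|\theta_c-\theta_G^t\|^2$.

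\textbf{The cross-round step fails.} The step you single out as the main obstacle does not close the way you propose, because the anchor drift is first order in $\eta$, not second order. Indeed $\theta_G^{t+1}-\theta_G^t=\sum_c w_c^t\Delta_c^t$, where $\Delta_c^t$ comes from $E$ epochs of local steps on the unpenalized $\mathcal{L}_c$. Its norm is therefore of order $\eta E\max_c\|\nabla\mathcal{L}_c(\theta_G^t)\|$. Likewise $\bar\theta_c^t$ moves as the peers' models and the weights $S_{cj}$ change, and it jumps when the filtered set changes. The induced change of the penalty is $\langle\nabla_{\theta_G}\mathcal{S}_c^G(\theta_c^{t+1},\theta_G^t),\theta_G^{t+1}-\theta_G^t\rangle+\mathcal{O}(\eta^2)$. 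This is an unsigned $\mathcal{O}(\eta)$ term, the same order as your guaranteed decrease $\frac{\eta}{2}\|\nabla F_c\|^2$. It is not controlled by $\|\nabla F_c\|$ at all, since it persists when $\theta_c$ is stationary for the frozen-anchor problem, so shrinking $\eta$ cannot absorb it.

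The server step is driven by unpenalized local training, not by descent of the joint objective in $\theta_G$, so nothing makes $\sum_c\mathcal{S}_c^G$ decrease across rounds. Theorem~\ref{thm:descent} does not rescue this. It controls $\mathcal{L}(\theta_G)$, a different function, and telescoping it gives at best square-summable global steps, whereas a first-order perturbation needs summable ones. What is provable along your lines is an inexact-descent inequality with an additive drift error. That yields stationarity under an extra hypothesis such as $\sum_t\bigl(\|\theta_G^{t+1}-\theta_G^t\|+\|\bar\theta_c^{t+1}-\bar\theta_c^t\|\bigr)<\infty$ plus eventual stabilization of $C_k$, but it does not give monotone decrease. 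The paper does not supply this piece either; it simply replaces $\theta_G^t$ by $\theta_G^{t+1}$ on the left-hand side of its per-round inequality.

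\textbf{The rate step does not follow from Lemma~\ref{lem:peer_avg_var}.} Your Step~3 rests on the same unjustified identification the paper makes when it writes $T=\mathcal{O}\bigl(\mathrm{Var}(\bar\theta_c^t)/\epsilon^2\bigr)$. Lemma~\ref{lem:peer_avg_var} controls $\mathrm{Cov}(\bar\theta_c^t)$, the spread of the anchor's location. With anchors frozen within a round, this is a per-round perturbation of the objective, not zero-mean per-step gradient noise, and in any case it touches only the penalty-gradient component. The noise in Assumption~\ref{assum:grad_bound} comes from client $c$'s own minibatches of $\nabla\mathcal{L}_c$, which the peer coupling never averages. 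So $\sigma_{\mathrm{eff}}^2$ stays of order $\sigma^2$, and ``treating the local component as also averaged'' is exactly where the argument breaks. You rightly flag this as a modeling assumption, but then the $1/|C_k|$ factor is assumed rather than derived.

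Even granting it, your own bound with the constraint $\eta\le 1/\tilde L$ gives $T=\mathcal{O}\bigl(\tilde L\Delta_0/\epsilon+\tilde L\Delta_0\sigma^2/(|C_k|\epsilon^2)\bigr)$, with $\Delta_0$ your initial gap. The first term does not shrink with $|C_k|$, so the claimed $\mathcal{O}\bigl(1/(|C_k|\epsilon^2)\bigr)$ holds only in the regime $|C_k|\epsilon\lesssim\sigma^2$.
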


Theorem \ref{thm:pers_obj} shows that \myfls converges to a stationary point with monotonic decrease in the joint objective.  As per Lemma \ref{lem:peer_avg_var}, the variance of the peer-averaged estimate is reduced by a factor of $\frac{1}{|C_k|}$. This tighter variance directly translates into an improved convergence rate of $\mathcal{O}\left({\frac{1}{|C_k|\epsilon^2}}\right)$, faster than the classical $\mathcal{O}\left({\frac{1}{\epsilon^2}}\right)$ bound \cite{fedamp_huang2021personalized,fedavg_ft_collins2022}. The global model stabilizes via similarity-aware aggregation, while the local models evolve toward cluster-aligned personalized optima. 

Additionally, we theoretically prove that our similarity-weighted global objective converges linearly in the strongly convex case and attains the standard first-order rate in the non-convex case. The associated theorems and their proofs are provided in the appendix due to space constraints.

\section{Experiments\label{sec:Experiments}}
In this section, we evaluate the empirical performance of our \myfls framework by comparing it with a set of relevant state-of-the-art federated learning and personalized FL baselines. We first provide the experimental settings and then discuss the empirical results of our framework. 


\subsection{Baselines}
Given the breadth of FL literature, we focus our comparisons on methods that are conceptually aligned with our \myfls approach. For completeness, we includes a local-only baseline where each client trains independently without collaboration. The baselines considered are: (i) \textbf{FedAvg} \cite{fedavg_mcmahan2017communication}, (ii) \textbf{PerFedAvg} \cite{perfedavg_fallah2020personalized}, (iii) \textbf{FedProx} \cite{fedprox_li2020federated}, (iv) \textbf{Ditto FL} \cite{ditto_fl}, (v) \textbf{FedACS} \cite{fedacs_chen2024personalized}, (vi) \textbf{LCFed} \cite{zhang2025lcfed}, (vii) FedAFK \cite{yin2025personalized}, (viii) FedAS \cite{yang2024fedas}.

\begin{table}[!h]
	\centering
	\caption{\textbf{Accuracy Scores ($\%$)}. Comparison of \myfls with baseline methods. Each cell signifies the mean accuracy (standard deviation) computed across all clients at the end of all communication rounds. Bold values indicate the best.}
	\label{tab:acc}
	
	\resizebox{\textwidth}{!}{
		\begin{tabular}{|c|c|c|c|c|c|c|c|c|}
			\toprule
			\textbf{Method} & \textbf{Synthetic} & \textbf{HAR} & \textbf{AccGyro} & \textbf{CIFAR-10} & \textbf{CIFAR-100} & \textbf{MNIST} & \textbf{FMNIST} & \textbf{EMNIST} \\
			\midrule
			
			\textbf{Local-only}
			& $80.64\pm0.24$
			& $73.25\pm0.61$
			& $75.27\pm0.92$
			& $77.45\pm0.86$
			& $41.66\pm0.35$
			& $68.45\pm0.11$
			& $82.40\pm0.55$
			& $63.26\pm0.23$ \\
			
			\textbf{FedAvg}
			& $83.19\pm0.25$
			& $76.61\pm0.33$
			& $78.15\pm0.84$
			& $79.23\pm0.16$
			& $42.31\pm0.68$
			& $72.24\pm0.23$
			& $84.32\pm0.15$
			& $67.22\pm0.16$ \\
			
			\textbf{PerFedAvg}
			& $87.66\pm0.23$
			& $77.07\pm0.87$
			& $80.12\pm0.18$
			& $80.35\pm0.36$
			& $43.81\pm0.06$
			& $79.24\pm0.68$
			& $88.67\pm0.27$
			& $69.28\pm0.11$ \\
			
			\textbf{FedProx}
			& $86.24\pm0.28$
			& $76.37\pm0.95$
			& $77.41\pm0.45$
			& $76.21\pm0.91$
			& $42.79\pm0.48$
			& $78.27\pm0.18$
			& $87.61\pm0.32$
			& $70.45\pm0.92$ \\
			
			\textbf{Ditto FL}
			& $86.81\pm0.97$
			& $76.59\pm0.39$
			& $79.07\pm0.57$
			& $79.97\pm0.44$
			& $43.29\pm0.42$
			& $75.66\pm0.08$
			& $89.05\pm0.52$
			& $71.26\pm0.82$ \\
			
			\textbf{FedACS}
			& $89.94\pm0.01$
			& $78.22\pm0.39$
			& $80.66\pm0.82$
			& $78.26\pm0.37$
			& $43.92\pm0.98$
			& $80.25\pm0.19$
			& $\mathbf{89.96\pm0.71}$
			& $70.80\pm0.60$ \\
			
			\textbf{LCFed}
			& $87.98\pm0.61$
			& $79.07\pm0.77$
			& $74.23\pm0.16$
			& $81.02\pm0.67$
			& $\mathbf{44.23\pm0.85}$
			& $80.97\pm0.12$
			& $88.65\pm0.84$
			& $71.07\pm0.63$ \\
			
			\textbf{FedAFK}
			& $85.81\pm0.60$
			& $78.80\pm0.82$
			& $80.01\pm0.71$
			& $77.19\pm0.40$
			& $44.12\pm0.24$
			& $78.71\pm0.66$
			& $85.70\pm0.61$
			& $66.90\pm0.56$ \\
			
			\textbf{FedAS}
			& $88.11\pm0.20$
			& $80.20\pm0.32$
			& $79.59\pm0.81$
			& $76.60\pm0.13$
			& $42.20\pm0.52$
			& $77.15\pm0.28$
			& $85.60\pm0.21$
			& $68.93\pm0.09$ \\

			\textbf{\myfl}
			& $\mathbf{90.83\pm0.49}$
			& $\mathbf{81.94\pm0.77}$
			& $\mathbf{82.69\pm0.55}$
			& $\mathbf{82.07\pm0.96}$
			& $44.01\pm0.63$
			& $\mathbf{81.93\pm0.61}$
			& $89.01\pm0.44$
			& $\mathbf{72.35\pm0.70}$ \\
			
			\bottomrule
		\end{tabular}
	}
\end{table}

\begin{table}[h]
	\centering
	\caption{\textbf{AUC Scores ($\%$)}. Comparison of \myfl with baseline methods. Each cell signifies the mean AUC (standard deviation) computed across all clients at the end of all communication rounds. Bold indicate the best}
	\label{tab:auc}
	
	\resizebox{\textwidth}{!}{
		\begin{tabular}{|c|c|c|c|c|c|c|c|c|}
			\toprule
			\textbf{Method} & \textbf{Synthetic} & \textbf{HAR} & \textbf{AccGyro} & \textbf{CIFAR-10} & \textbf{CIFAR-100} & \textbf{MNIST} & \textbf{FMNIST} & \textbf{EMNIST} \\
			\midrule
			
			\textbf{Local-only}
			& $82.21\pm0.19$
			& $74.22\pm0.79$
			& $75.60\pm0.82$
			& $78.40\pm0.31$
			& $56.25\pm0.67$
			& $72.45\pm0.28$
			& $79.39\pm0.38$
			& $78.07\pm0.29$ \\
			
			\textbf{FedAvg}
			& $84.51\pm0.63$
			& $77.16\pm0.22$
			& $78.24\pm0.67$
			& $80.22\pm0.49$
			& $59.64\pm0.52$
			& $74.43\pm0.25$
			& $81.07\pm0.52$
			& $80.67\pm0.44$ \\
			
			\textbf{PerFedAvg}
			& $86.24\pm0.37$
			& $78.29\pm0.07$
			& $87.64\pm0.08$
			& $82.16\pm0.57$
			& $65.67\pm0.88$
			& $82.60\pm0.17$
			& $83.39\pm0.87$
			& $81.29\pm0.97$ \\
			
			\textbf{FedProx}
			& $87.89\pm0.39$
			& $76.92\pm0.50$
			& $80.59\pm0.40$
			& $79.62\pm0.32$
			& $66.59\pm0.67$
			& $83.35\pm0.19$
			& $82.03\pm0.16$
			& $83.45\pm0.21$ \\
			
			\textbf{Ditto FL}
			& $86.16\pm0.67$
			& $76.81\pm0.16$
			& $84.22\pm0.43$
			& $82.43\pm0.27$
			& $66.29\pm0.80$
			& $80.31\pm0.49$
			& $87.89\pm0.12$
			& $84.48\pm0.26$ \\
			
			\textbf{FedACS}
			& $87.95\pm0.50$
			& $78.75\pm0.67$
			& $87.51\pm0.71$
			& $81.08\pm0.11$
			& $67.72\pm0.58$
			& $84.26\pm0.77$
			& $87.09\pm0.87$
			& $83.19\pm0.18$ \\
			
			\textbf{LCFed}
			& $87.02\pm0.61$
			& $\mathbf{79.97\pm0.13}$
			& $83.19\pm0.75$
			& $82.10\pm0.65$
			& $\mathbf{70.07\pm0.62}$
			& $84.49\pm0.33$
			& $88.02\pm0.09$
			& $83.69\pm0.76$ \\
			
			\textbf{FedAFK}
			& $82.20\pm0.17$
			& $72.64\pm0.80$
			& $81.23\pm0.52$
			& $78.47\pm0.29$
			& $59.92\pm0.77$
			& $81.35\pm0.70$
			& $85.19\pm0.32$
			& $84.44\pm0.17$ \\
			
			\textbf{FedAS}
			& $86.41\pm0.88$
			& $77.11\pm0.01$
			& $76.09\pm0.77$
			& $81.04\pm0.91$
			& $67.79\pm0.06$
			& $82.02\pm0.11$
			& $81.25\pm0.33$
			& $84.22\pm0.41$ \\
			
				\textbf{\myfl}
			& $\mathbf{88.64\pm0.04}$
			& $79.24\pm0.56$
			& $\mathbf{88.60\pm0.58}$
			& $\mathbf{84.04\pm0.14}$
			& $68.09\pm0.79$
			& $\mathbf{86.09\pm0.25}$
			& $\mathbf{88.69\pm0.45}$
			& $\mathbf{86.27\pm0.50}$ \\
			
			\bottomrule
		\end{tabular}
	}
\end{table}

\subsection{Data Partitions and Performance Metrics}

To simulate statistical heterogeneity across clients, we partition datasets using a Dirichlet distribution with concentration parameter $\kappa = 0.3$. This setup ensures non-IID data distributions typical of federated settings. We run \myfls algorithm for $200$ communication rounds and use $C=100$ clients for synthetic, HAR and image classification tasks. Clients data is split 80/20 into train and test sets. The client participation ratio ($r$) is set to $70\%$ per round. We set the influence factor $\delta$ to $0.5$ and the similarity threshold $S_{\text{min}} = 0.65$. We repeat all our experiments with $5$ random intializations. Model performance is reported using mean test accuracy ($\pm$standard deviation) and AUC ($\pm$standard deviation) across clients. While SAPE-FL framework introduces additional computation and communication overhead due to pairwise similarity estimation and peer maintenance, this cost can be effectively mitigated in large-scale deployments using matrix approximation techniques such as low-rank factorization, which substantially reduce per-client overhead. We refer to Appendix for the detailed experimental setup and discussion. 


We now discuss the empirical results of the \myfls framework compared with baselines. Table \ref{tab:acc} shows the mean test accuracy ($\pm$standard deviation) at the final communication round across all clients and Table \ref{tab:auc} reports the corresponding average AUC scores ($\pm$standard deviation). The proposed framework consistently outperformed baselines, achieving the highest mean test accuracy in most classification tasks, highlighting its ability to effectively personalize under heterogeneous conditions. While \myfls consistently performed well, methods such as Ditto FL, and LCFed exhibit competitive performance in few cases. Ditto FL's strong result may be attributed to its dual-model structure that isolates local and global learning paths. LCFed likely benefits from their clustering scheme that aligns well with structured character data. Nonetheless, \myfls demonstrates broad generalization and robust adaptation, outperforming baselines in HAR tasks where personalization is very crucial.

To further analyze and validate the robustness of the \myfls framework, we conducted additional experiments, the results of which are deferred to the \textit{Appendix} due to space constraints. In particular, we track the evolution of local client models across multiple rounds and demonstrate how local models (with initially poor generalization) benefit from guidance provided by the global and peer-averaged anchors, thereby mitigating negative knowledge transfer. We additionally perform an ablation study to understand the sensitivity of \myfls to hyperparameters such as influence factor $\delta$ and the similarity threshold $S_{\text{min}}$. The ablation study and additional experimental results are in the \textit{Appendix}.


%
%

\section{Conclusion}
In this work, we proposed Similarity-Aware PErsonalized Federated Learning (\myfl), a novel personalization framework designed to address statistical heterogeneity in federated learning. \myfls introduced a dual anchoring mechanism, where each client aligns its model with both the global and a similarity-weighted peer average model, guided by adaptive regularization based on both model parameters similarity and output similarity. Additionally, a similarity-aware aggregation strategy at the server prioritizes coherent client updates, enhancing robustness and personalization. Our results show that \myfls achieves superior performance under heterogeneous settings when compared to the state-of-the-art methods, thereby validating its importance.

\section{Acknowledgments}
This work was supported by the Wellcome Trust [303030/Z/23/Z].

\newpage

\bibliographystyle{unsrtnat}
\bibliography{BibFiles/FedLearn}

\newpage
\appendix

\section{Appendix}

We use lower case bold fonts $\mathbf{v}$ for vectors and $v_{i}$
for each element in $\mathbf{v}$. $\mathbf{v}^{\intercal}$ is the
transpose of $\mathbf{v}$. We use upper case bold fonts (and bold greek symbols) $\mathbf{M}$
for matrices and $M_{ij}$ for each element in $\mathbf{M}$. $\mid\cdot\mid$ is the cardinality of the set. ${\rm {\rm abs}(\cdot)}$
is the absolute value. $\mathbb{N}_{n}=\{1,2,\cdots,n\}$.
$\mathbb{R}$ for Reals. $\mathcal{X}$ is a index set and $\mathbf{x}\in\mathcal{X}$. $\langle \cdot \rangle$ for the dot product and $\|\cdot\|$ for the length of the vector.

\subsection {Convergence Analysis of SAPE-FL}

In this section we provide the proofs of theorems provided in the main manuscript. Additionally, we also discuss the convergence of our similarity-weighted global objective under both convex and non-convex settings .

\subsubsection{Proof of Theorem \ref{thm:descent}} \label{app:theorem1_proof}
\begin{proof}
	
	At each round of SAPE-FL algorithm, the server computes the new global model using Eq. (\ref{eq:global_update}). Formally, we can interpret this update as an approximate gradient (SGD) step on $\mathcal{L}$. When each client performs approximate local SGD starting from $\theta_G^t$, it is well established in the federated optimization literature \cite{fedavg_mcmahan2017communication,fedprox_li2020federated,fedamp_huang2021personalized,fedacs_chen2024personalized} that the resulting model update $\Delta_c^t$ aligns with the negative local gradient $-\nabla \mathcal{L}_c(\theta_G^t)$. In our method, the server aggregates these updates using similarity-based weights $\mathfrak{s}_c^t$, yielding an update direction that is more representative of aligned clients. At server level, the client updates are aggregated as: 
	\[\Delta^t_G = \sum_c w^t_c \Delta^t_c, \quad
	\text{ where } w_c^t= \frac{\mathfrak{s}^t_c}{\sum_j}\mathfrak{s}^t_j \] 
	and $j$ is the index for all participating clients. Therefore the inner product with the true global gradient is defined as:
	\[\langle \nabla \mathcal{L}(\theta^t_G),\Delta^t_G \rangle\ = \sum_c w^t_c \langle \nabla \mathcal{L}(\theta^t_G), \Delta^t_c \rangle\]
	
	If each client update can be approximated as $\Delta^t_c\approx -\eta \nabla \mathcal{L}_c (\theta^t_G)$, then the inner product can be written as:
	\[ \langle \nabla \mathcal{L}(\theta^t_G),\Delta^t_c \rangle = - \langle \nabla \mathcal{L}(\theta^t_G), \nabla \mathcal{L}_c(\theta^t_G)\rangle\]
	
	Therefore, 
	\[ \sum_c w^t_c \langle \nabla \mathcal{L}(\theta^t_G), \Delta^t_c \rangle \approx - \sum_c w^t_c \|\nabla \mathcal{L}_c (\theta^t_G)\|^2\]
	
	In the above equation, if the model updates are closer to the global gradient then 
	\[ \sum_c w^t_c \nabla \mathcal{L}(\theta^t_G)^\top \Delta^t_c \approx -\rho \|\nabla \mathcal{L}_c (\theta^t_G)\|^2 \]
	where $\rho>0$ quantifies alignment and reliability of updates. Therefore by the aggregated update as a stochastic descent direction on the true global objective $\mathcal{L}(\theta)$, we obtain the following condition for descent:
	\[
	\sum_c \frac{\mathfrak{s}_c^t}{\sum_j \mathfrak{s}_j^t} \nabla \mathcal{L}_c(\theta_G^t)^\top \Delta_c^t \leq -\rho \|\nabla \mathcal{L}(\theta_G^t)\|^2,
	\]
	which serves as a key step in proving one-round improvement.
	
	Each $\Delta_c^t$ in the global model update equation Eq. (\ref{eq:global_update}) can be seen as the result of running stochastic gradient descent on $\mathcal{L}_c$. For small learning rates, $\Delta_c^t \approx -\eta \nabla \mathcal{L}_c(\theta_G^t)$. If there were no weighting and full participation from all the clients i.e., $r=1$, a traditional aggregation design (FedAvg \cite{fedavg_mcmahan2017communication}) would produce:
	\[\theta_G^{t+1} \approx \theta_G^t - \eta \frac{1}{|C|}\sum_{c\in C}\nabla \mathcal{L}_c(\theta_G^t)\]
	
	Our algorithm instead uses a weighted sum $\frac{\mathfrak{s}_c^t}{\sum_i \mathfrak{s}_i^t}$ as the weight for client $c$'s gradient. If the similarity weights are aligned with the importance of each client's data or the reliability of its gradient, the weighted sum is still an unbiased or at least descent-biased estimator of the true gradient. In realistic cases, $\mathfrak{s}_c^t$ upweights clients whose gradients are in agreement with others, effectively reducing gradient “variance” or conflicting directions. The aggregated update $\Delta_G^t$ is therefore biased toward the true global gradient. Under the $L$-Lipschitz smoothness assumption of $\mathcal{L}$ and Taylor expansion (See Eq. (8) provided in the proof sketch of Theorem 4 of \cite{fedprox_li2020federated}), we have:	
	\[\mathcal{L}(\theta_G^{t+1}) \approx \mathcal{L}(\theta_G^t) + \nabla \mathcal{L}(\theta_G^t)^\top (\theta_G^{t+1}-\theta_G^t) + \frac{L}{2}|\theta_G^{t+1}-\theta_G^t|^2\] 
	and by definition of the global update,
	\[ \theta_G^{t+1} - \theta_G^t = \sum_c \frac{\mathfrak{s}_c^t}{\sum_j \mathfrak{s}_j^t} \Delta_c^t. \]
	Substituting for $\theta_G^{t+1}-\theta_G^t$, the linear term becomes 
	\[\nabla \mathcal{L}(\theta_G^t)^\top (\theta_G^{t+1}-\theta_G^t) = \sum_c \frac{\mathfrak{s}_c^t}{\sum_i \mathfrak{s}_i^t}\nabla \mathcal{L}(\theta_G^t)^\top \Delta_c^t\]
	
	Since each $\Delta_c^t$ was a descent direction on $\mathcal{L}_c$ and aligned (by regularization) with $\theta_G^t$, this sum is negative and bounded away from zero proportionally to $\|\nabla \mathcal{L}(\theta_G^t)\|^2$. Thus, $\mathcal{L}(\theta_G^{t+1}) < \mathcal{L}(\theta_G^t)$ for each round (at least in expectation), so the global model improves. Further, taking expectation over client sampling (assumed uniform for ratio $r$):
	\begin{align*}
		\mathbb{E}[\mathcal{L}(\theta_G^{t+1})] &\leq \mathcal{L}(\theta_G^t) + \mathbb{E}\left[\sum_c \frac{\mathfrak{s}_c^t}{\sum_j \mathfrak{s}_j^t} \nabla \mathcal{L}(\theta_G^t)^\top \Delta_c^t\right] \dots\\
		&\dots + \frac{L}{2} \mathbb{E}[\|\theta_G^{t+1} - \theta_G^t\|^2] \\
		&\leq \mathcal{L}(\theta_G^t) - \rho \|\nabla \mathcal{L}(\theta_G^t)\|^2 + \frac{L}{2} \mathbb{E}[\|\theta_G^{t+1} - \theta_G^t\|^2]
	\end{align*}
	
	On the other hand, the peer-regularization term in the local objective i.e., $\|\theta_c - \bar{\theta}_c^t\|^2$ term weighted by $S_c^P$ works like an anchor to a neighborhood of similar models. This is an attention mechanism that encourages each client's final model $\theta_c^{t+1}$ to not deviate too far from the average of models deemed similar i.e. $S_{cj}^t > S_{\min}$. As a result, such clients are pulled towards the cluster of more "similar" clients, preventing any single client from making an extreme update. This further bounds the heterogeneity of updates that the server aggregates.
\end{proof}

\subsubsection{Proof of Lemma 1}

\begin{proof}
	
	Under the similarity assumption on clients and peers, each local model $\theta_c^t$ (or peer model $\theta_j^t$) can be viewed as an independent unbiased estimate of the same underlying mean parameter. The peer-averaged model is defined as:
	\[
	\bar{\theta}_c^t = \frac{1}{\sum_{j \in C_k} S_{cj}} \sum_{j \in C_k} S_{cj} \, \theta_j^t.
	\]
	
	Taking expectation:
	\[\mathbb{E}[\bar{\theta}_c^t] = \frac{1}{\sum_{j \in C_k} S_{cj}} \sum_{j \in C_k} S_{cj} \, \mathbb{E}[\theta_j^t]\]
	
	Since each $\theta_j^t$ has the same covariance $\Sigma$ and $\bar{\theta}_c^t$ is the weighted average of the $|C_k|$ independent estimates (with weights $S_{cj}$), it follows that:
	\begin{equation}
		\begin{aligned}
			\mathrm{Cov}(\bar{\theta}_c^t) &= \left( \frac{1}{\sum_{j \in C_k} S_{cj}} \right)^2 \sum_{j \in C_k} S_{cj}^2 \, \mathrm{Cov}(\theta_j^t) \\ 
			&= \frac{\sum_{j \in C_k} S_{cj}^2}{\left( \sum_{j \in C_k} S_{cj} \right)^2} \Sigma
		\end{aligned}
	\end{equation}
	
	Let the error rate of a single client be $\upsilon = \mathrm{Tr}(\Sigma)$. Therefore, the error rate of the peer-averaged model (as the trace of its covariance) is given by:
	\[
	\bar{\upsilon} = \mathrm{Tr}\left( \mathrm{Cov}(\bar{\theta}_c^t) \right) = \frac{\sum_{j \in C_k} S_{cj}^2}{\left( \sum_{j \in C_k} S_{cj} \right)^2} \, \upsilon.
	\]
	
	This shows that the peer-averaged model has an error rate that is reduced by a factor of 
	\begin{equation}
		\label{eq:var_factor}
		\frac{\sum_{j \in C_k} S_{cj}^2}{\left( \sum_{j \in C_k} S_{cj} \right)^2}
	\end{equation}
		
	Now for simplicity, let the similarity scores used in the peer-averaging step are \emph{uniform}, the variance-reduction factor simplifies to the well-known result from standard averaging. We consider two common forms of uniform weighting. First, if all similarity weights are equal and unnormalized, i.e., $S_{cj} = 1$ for all $j \in C_k$, then:
	\[
	\frac{\sum_{j \in C_k} S_{cj}^2}{\left( \sum_{j \in C_k} S_{cj} \right)^2} = \frac{|C_k| \cdot 1^2}{\left( |C_k| \cdot 1 \right)^2} = \frac{|C_k|}{|C_k|^2} = \frac{1}{|C_k|}.
	\]
	Second, if the similarity scores are normalized to sum to 1 (i.e., $S_{cj} = 1/|C_k|$ for all $j \in C_k$), the same simplification holds:
	\[
	\frac{\sum_{j \in C_k} S_{cj}^2}{\left( \sum_{j \in C_k} S_{cj} \right)^2} = \frac{|C_k| \cdot \left(1/|C_k|\right)^2}{\left( \sum_{j \in C_k} 1/|C_k| \right)^2} = |C_k| \cdot \frac{1}{|C_k|^2} = \frac{1}{|C_k|}
	\]
	Hence, regardless of whether uniform weights are applied directly or normalized to sum to one, the variance-reduction factor mentioned in Eq. (\ref{eq:var_factor})	collapses to $1/|C_k|$, matching the classical reduction factor obtained from averaging $|C_k|$ i.i.d. estimators. This highlights that the similarity-weighted aggregation generalizes the standard averaging procedure, and reduces to it under uniform peer similarity.
	
	With the above simplification, the peer-averaged model is then defined as:
	\begin{equation}
		\label{eq:mod_avg_peer_model}
		\bar{\theta}_c^t = \frac{1}{|C_k|} \sum_{j \in C_k} \theta_j^t.	
	\end{equation}
	
	Taking expectation,
	\[
	\mathbb{E}[\bar{\theta}_c^t] = \frac{1}{|C_k|} \sum_{j \in C_k} \mathbb{E}[\theta_j^t] = \mathbb{E}[\theta_j^t],
	\]
	so $\bar{\theta}_c^t$ shares the same mean as each individual peer model.
	
	Since every $\theta_j^t$ has covariance $\Sigma$ and $\bar{\theta}_c^t$ is the average of $|C_k|$ independent estimates,
	\begin{equation}
		\label{eq:mod_avg_peer_cov}
	\mathrm{Cov}(\bar{\theta}_c^t) = \frac{1}{|C_k|^2} \sum_{j \in C_k} \mathrm{Cov}(\theta_j^t)
	= \frac{1}{|C_k|^2} \cdot |C_k| \cdot \Sigma = \frac{1}{|C_k|}\,\Sigma	
	\end{equation}
	
	Subsequently, the variance of the peer-averaged model is now given as:
	\[
	\bar{\upsilon} = \mathrm{Tr}\bigl(\mathrm{Cov}(\bar{\theta}_c^t)\bigr) = \frac{1}{|C_k|}\,\upsilon.
	\]
	
	Hence the peer-averaged model reduces the estimation-error variance by a factor of $1/|C_k|$, reflecting the classical variance-reduction effect of averaging independent, similarly distributed estimates. In other words, peer averaging reduces the estimation error variance, a well known benefit of using an ensemble of models. 
	
	Furthermore, by Jensen’s inequality, if each client loss $\mathcal{L}_c(\theta)$ is convex, then 
	\[\mathcal{L}_c(\bar{\theta}_c^t)\le \frac{1}{|C_k|}\sum_{j\in C_k}\mathcal{L}_c(\theta_j^t),\] 
	meaning the peer-averaged model achieves no worse (often better) loss on client $c$ than the average of individual peer losses. This justifies that $\bar{\theta}_c^t$ provides an improved baseline for personalization. Empirically and theoretically, clustering similar clients and averaging their models yields a model closer to the true task model for that cluster. \cite{ghosh2020efficient} prove that when clients are partitioned into true clusters, a clustered FL algorithm can attain a near-optimal error rate matching the minimax optimal error (analogous to pooling cluster's data and learned centrally) up to logarithmic factors \cite{ghosh2020efficient}. This suggests $\bar{\theta}_c^t$ quickly converges to the optimal cluster model with high accuracy. For linear models, one can show an exponential convergence of cluster models to the true optima with error approximately halving each iteration and in general strongly-convex settings the cluster-personalized models achieve the statistically optimal error rate. The peer averaging step thus improves the estimate of the local optimum by leveraging more data from similar clients, while filtering out dissimilar client's drift as we set $S_{cj}=0$ for those peers.	
\end{proof}

\paragraph{Peer-Average Variance Reduction under Non-Convexity} 
While the above lemma is framed under the assumption of convex objective and a unique optimal solution (same mean parameter), its core result showing the variance reduction via peer averaging  remains valid in non-convex settings under milder assumptions. Specifically, the key steps in the proof rely on the properties of expectation and covariance, not on the convexity of the underlying loss function or the existence of a unique minimizer. That is, even in non-convex scenarios where the true optimizer may not exist or may not be unique, we can still view each $\theta_j^t$ as a stochastic estimate of a \emph{latent central tendency} for a cluster of similar peers. 

In this setting, the peer-averaged model in Eq. (\ref{eq:mod_avg_peer_model}) remains a well-defined random variable, and its variance is still governed by the same formula for the variance of independent estimators mentioned in Eq. (\ref{eq:mod_avg_peer_cov}). 
This results in a reduced total variance:
\[
\mathrm{Tr}\left( \mathrm{Cov}(\bar{\theta}_c^t) \right) = \frac{1}{|C_k|^2} \sum_{j \in C_k} \mathrm{Tr}(\Sigma_j) \le \frac{1}{|C_k|} \max_j \mathrm{Tr}(\Sigma_j).
\]

Therefore, the statistical benefit of peer averaging holds irrespective of whether the estimators $\theta_j^t$ converge to a true minimum or just to a representative region in the non-convex search space. Moreover, non-convex FL literature (e.g., \cite{scaffold_karimireddy2020scaffold}, \cite{fedprox_li2020federated}) commonly leverages similar averaging steps without requiring uniqueness of the optimum. The averaging mechanism still acts as a "variance smoother" and  helping guide the optimization toward flatter regions or stationary points with better generalization. Hence, the lemma remains applicable, interpreted as a \emph{statistical variance reduction tool} rather than a strict estimator of a unique minimizer.

\subsubsection{Proof of Theorem \ref{thm:pers_obj}}
Each client update performs a proximal step over a smooth, composite objective anchored to both the global and peer-averaged models. With bounded similarity-weighted averaging and fixed regularization, the joint model converges to a critical point efficiently. For the sake of completeness, we define the joint objective function across all clients $c\in C$ is given by:

\begin{align*}
	\mathcal{L}(\theta_G, \{\theta_c\}) = \sum_c \mathcal{L}_c(\theta_c) + & \sum_c \mathcal{S}_c^P(\theta_c, \bar{\theta}_c^t) + \cdots \\ & \cdots \sum_c \mathcal{S}_c^G(\theta_c, \theta_G^t)
\end{align*}

The averaging of peer models is performed using a weighted combination of those satisfying the condition $s_{ij} > S_{\min}$, which ensures that each $\bar{\theta}_c^t$ is closer to clients with high similarity. The update direction benefits from both model similarity and output similarity, accelerating consensus within high-similarity groups. Under smoothness, $\mathcal{L}$ decreases each iteration, and in convex cases, it converges to a global minimum. 

As per Theorem 1, the global model $\theta_G^t$ also converges through similarity-weighted aggregation of updates, yielding a stable reference for collaboration across all clients. Alongside restricting the client model $\theta_c^t$ to drift too much from such global reference model $\theta_G^t$, the structure of $\mathcal{L}$ also enforces an implicit clustering effect where the clients with strong mutual similarity weights contribute more significantly to each other's peer-averaged models. The Laplacian-like regularization promotes consensus in peer neighborhoods. This results in clients converging also toward a shared local optimum within similarity-based groups. Over iterations, the contribution of highly similar peers becomes more influential, driving each $\bar{\theta}_c^t$ toward a cluster-wide personalized optimum.

Each client's local update is a proximal gradient step on the objective $\mathcal{L}$. At each round $t$, clients perform a proximal step:
\[
\theta_c^{t+1} = \theta_c^t - \eta \left( \nabla \mathcal{L}_c(\theta_c^t) + 2\mathcal{S}_c^G (\theta_c^t - \theta_G^t) + 2\mathcal{S}_c^P (\theta_c^t - \bar{\theta}_c^t) \right),
\]
where $\eta$ is the learning rate.

For brevity, we define, 
\[ \mathfrak{g}^t_c = \nabla \mathcal{L}_c(\theta_c^t) + \mathcal{S}_c^G (\theta_c^t - \theta_G^t) + \mathcal{S}^P_c (\theta_c^t - \bar{\theta}_c^t)\]

Under Assumption 1 and as per the definitions provided in Theorem 2 (Appendix A.5) of \cite{pfedme_t2020personalized}, we have, 
\[\mathcal{L}_c(\theta_c^{t+1}) = \mathcal{L}_c(\theta_c^{t}) - \eta \|\mathfrak{g}_c^t \|^2 + \frac{L\eta^2}{2} \|\mathfrak{g}_c^t \|^2 \]

Using this we write the descent in the objective as:
\begin{align*}
	\mathcal{L}(\theta_G^{t}, \{\theta_c^{t+1}\}) - \mathcal{L}(\theta_G^t, \{\theta_c^t\}) & \leq -\eta \sum_c \left\| \mathfrak{g}^t_c\right\|^2+ \cdots\\
	&\cdots \frac{L\eta^2}{2} \sum_c \left\| \mathfrak{g}^t_c\right\|^2
\end{align*}

Further, 
\[\mathcal{L}(\theta_G^{t}, \{\theta_c^{t+1}\}) \leq \mathcal{L}(\theta_G^t, \{\theta_c^t\}) - \eta \left( 1-\frac{L\eta}{2}\right) \left\| \mathfrak{g}^t_c\right\|^2\]

In the above equation ensuring $\left( 1-\frac{L\eta}{2}\right)>0$ and picking small learning rate $\eta \le \frac{1}{2L}$, we obtain monotonic decrease:
\[
\mathcal{L}(\theta_G^{t+1}, \{\theta_c^{t+1}\}) \leq \mathcal{L}(\theta_G^t, \{\theta_c^t\}) - \frac{\eta}{2} \sum_c \|\mathfrak{g}^t_c\|^2.
\]

If $\theta_G^0$ and $\theta_c^0$ are the initial models we started with, then summing over $T$ rounds gives:
\[
\mathcal{L}(\theta_G^{T}, \{\theta_c^{T}\}) \leq \mathcal{L}(\theta_G^0, \{\theta_c^0\}) - \frac{\eta}{2} \sum_{t=1}^T \sum_c \|\mathfrak{g}^t_c\|^2.
\]

By rearranging, we get, 
\[
\frac{1}{T} \sum_{t=1}^{T} \sum_c \|\nabla_{\theta_c} \mathcal{L}(\theta_c^t)\|^2 \le \frac{2(\mathcal{L}(\theta_G^{0}, \{\theta_c^{0}\}) - \mathcal{L}(\theta_G^{T}, \{\theta_c^{T}\})}{\eta T},
\]
Therefore, for a target accuracy $\epsilon$,
\[
\frac{1}{T} \sum_{t=1}^{T} \mathbb{E}[\|\nabla \mathcal{L}(\theta^t_c)\|^2] \le \frac{\Sigma}{|C_k|\epsilon^2}.
\] 

In standard FL or SGD without peer-averaging, the variance of the gradient estimate is assumed constant (or just bounded by $\Sigma$), which leads to the usual bound:
\[T=\mathcal{O}\left(\frac{\Sigma}{\epsilon^2}\right)\]

However, because of peer-averaging in our proposed framework and thus the variance reduction property of $\bar{\theta}_c^t$ (see proof of Lemma 1), we observe a tightened convergence rate. Specifically, let $\Sigma$ denote the covariance of a single local model $\theta_c^t$. Then the peer-averaged estimator yields:
\[
\mathrm{Cov}(\bar{\theta}_c^t) \leq \frac{1}{\sum_j S_{cj}^t} \Sigma \leq \frac{1}{|C_k|} \Sigma,
\]
where $|C_k|$ is the number of similar peers. As per Lemma 1, the reduced variance is given as: 
\[\mathrm{Var}(\bar{\theta}^t_c) = \frac{1}{|C_k|}\Sigma \]

Practically $|C_k|>\!\!>1$ and $\mathcal{O}(\Sigma)=1$, then the resulting bounds are given as:
\[ T = \mathcal{O}\left(\frac{\text{Var}(\bar{\theta})}{\epsilon^2}\right) \leq \mathcal{O}\left(\frac{\Sigma}{|C_k| \cdot \epsilon^2}\right) < \mathcal{O}\left(\frac{1}{\epsilon^2}\right)
\]

This tighter bound implies reduced variance in client updates, leading to a faster decrease in the global objective established in Theorem 1:
\[
\mathbb{E}[\mathcal{L}(\theta_G^{t+1})] \leq \mathcal{L}(\theta_G^t) - \rho \|\nabla \mathcal{L}(\theta_G^t)\|^2 + \frac{L}{2} \mathbb{E}[\|\theta_G^{t+1} - \theta_G^t\|^2].
\]
Thus, our method requires fewer rounds to reach an $\epsilon$-stationary solution demonstrating that peer-regularized personalization yields faster convergence due to statistical averaging across similar peers.

\begin{theorem}[Global Convergence under Strong $\mu$-Convexity]

Suppose a client's loss function $\mathcal{L}_c(\theta)$ is $\mu$-strongly convex and $L$-smooth (i.e. has $L$-Lipschitz gradients) as per Assumptions 1-3. The global model is updated by similarity-weighted aggregation as per Eq. (\ref{eq:global_update}). Then there exists a unique global minimizer $\theta_G^\star$ that minimizes the similarity-weighted global objective and in particular achieves linear convergence: for a sufficiently small constant step size, 
\[\theta_G^t-\theta^\star \le (1-\mu\eta)^t \|\theta_G^0 - \theta_G^\star \|^2\] 
Equivalently, the optimality gap decays: $F(\theta_G^t) - F(\theta_G^\star) = \mathcal{O}((1-\mu\eta)^t)$. 
\end{theorem}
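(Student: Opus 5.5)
I will treat the similarity-weighted aggregation of Eq. (\ref{eq:global_update}) as an inexact gradient step on a weighted objective, and prove linear contraction of the squared distance to the minimizer. Fix the normalized weights $w_c^t = s_c^t/\sum_j s_j^t$. Since the $s_c^t$ are nonnegative and bounded, I will assume they stabilize to (or are bounded between) fixed values $w_c$, and define $F(\theta) = \sum_c w_c \mathcal{L}_c(\theta)$. A convex combination of $\mu$-strongly convex, $L$-smooth functions is again $\mu$-strongly convex and $L$-smooth, so by Assumptions \ref{assum:smooth} and \ref{assum:convexity} the function $F$ is strongly convex and coercive. Hence it has a unique minimizer $\theta_G^\star$ with $\nabla F(\theta_G^\star)=0$, which settles existence and uniqueness.

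Next, as in the proof of Theorem \ref{thm:descent}, I model each client update as $\Delta_c^t = -\eta(\nabla \mathcal{L}_c(\theta_G^t) + \zeta_c^t)$. Here $\zeta_c^t$ is zero-mean stochastic noise with $\mathbb{E}\|\zeta_c^t\|^2\le\sigma^2$ by Assumption \ref{assum:grad_bound}. The global step is therefore $\theta_G^{t+1} = \theta_G^t - \eta(\nabla F(\theta_G^t) + \bar\zeta^t)$. Expanding the squared distance gives
\begin{align*}
\mathbb{E}\|\theta_G^{t+1}-\theta_G^\star\|^2 &= \|\theta_G^t-\theta_G^\star\|^2 - 2\eta\langle \nabla F(\theta_G^t),\theta_G^t-\theta_G^\star\rangle + \eta^2\mathbb{E}\|\nabla F(\theta_G^t)+\bar\zeta^t\|^2 .
\end{align*}
I then use the standard co-coercivity inequality for strongly convex smooth functions, $\langle \nabla F(\theta)-\nabla F(\theta^\star),\theta-\theta^\star\rangle \ge \frac{\mu L}{\mu+L}\|\theta-\theta^\star\|^2 + \frac{1}{\mu+L}\|\nabla F(\theta)\|^2$. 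Together with $\eta\le \frac{1}{L}$, this absorbs the $\eta^2\|\nabla F\|^2$ term and yields the contraction $(1-\mu\eta)\|\theta_G^t-\theta_G^\star\|^2 + \eta^2\sigma^2$. Unrolling the recursion gives
\[
\mathbb{E}\|\theta_G^t-\theta_G^\star\|^2 \le (1-\mu\eta)^t\|\theta_G^0-\theta_G^\star\|^2 + \frac{\eta\sigma^2}{\mu}.
\]
With exact gradients ($\sigma=0$) or a variance floor driven below target, this is the claimed linear rate. By $L$-smoothness, $F(\theta_G^t)-F(\theta_G^\star)\le \frac{L}{2}\|\theta_G^t-\theta_G^\star\|^2 = \mathcal{O}((1-\mu\eta)^t)$.

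The main obstacle is that the weights $s_c^t$ depend on $t$ and on the iterates themselves, so the objective being minimized could move from round to round, and the update $\Delta_c^t$ is a multi-epoch local solution rather than one gradient step. To handle the first issue, I will first assume the weights are frozen, or converge fast enough that $\sum_t \|w^t-w\|$ is summable; the drift in $\theta_G^\star$ then adds only a summable perturbation. To handle the second, I will bound the client-drift term $\|\Delta_c^t + \eta E\nabla\mathcal{L}_c(\theta_G^t)\|$ by $\mathcal{O}(\eta^2 L E^2)$ times the gradient norm, as in FedProx-style analyses, and absorb it by taking $\eta$ sufficiently small. Note also that the statement's displayed inequality should be read with a squared norm on the left-hand side, so I will prove it in that form.
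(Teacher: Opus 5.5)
Your core argument is correct under the same idealization the paper adopts: each $\Delta_c^t$ is a single gradient step at $\theta_G^t$, and the weights are frozen. It reaches the rate by a different route, though.

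The paper works in function values. The descent lemma gives $F(\theta_G^{t+1})\le F(\theta_G^t)-\frac{\eta}{2}\|\nabla F(\theta_G^t)\|^2$ for $\eta\le 1/L$. The Polyak--\L{}ojasiewicz inequality $\|\nabla F\|^2\ge 2\mu\,(F-F(\theta_G^\star))$ then turns this into $F(\theta_G^{t+1})-F(\theta_G^\star)\le(1-\eta\mu)(F(\theta_G^t)-F(\theta_G^\star))$. The paper's derivation of that inequality is muddled, but the inequality itself is a standard consequence of strong convexity. Only at the end does the paper convert to parameters, via $\frac{\mu}{2}\|\theta-\theta_G^\star\|^2\le F(\theta)-F(\theta_G^\star)\le\frac{L}{2}\|\theta-\theta_G^\star\|^2$. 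That conversion costs a factor $L/\mu$, so the paper's route actually yields $\|\theta_G^t-\theta_G^\star\|^2\le\frac{L}{\mu}(1-\eta\mu)^t\|\theta_G^0-\theta_G^\star\|^2$; the paper silently drops the $L/\mu$.

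Your co-coercivity step contracts the squared distance directly. With $\sigma=0$ it gives the displayed inequality with constant $1$, reading the left side as a squared norm as you do. It also handles zero-mean noise explicitly and exposes the floor $\eta\sigma^2/\mu$, which the paper's deterministic argument hides. In return, the paper's route gives the function-value contraction with no condition-number constant. You get the gap only as $\frac{L}{2}(1-\mu\eta)^t\|\theta_G^0-\theta_G^\star\|^2$, which is equally $\mathcal{O}((1-\mu\eta)^t)$.

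Your last paragraph promises more than these tools deliver, so do not present it as part of the proof.

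\emph{Multi-epoch drift.} In Algorithm~\ref{alg:client}, $\Delta_c^t$ comes from $E$ epochs of unregularized local training. The residual $\Delta_c^t+\eta E\nabla\mathcal{L}_c(\theta_G^t)$ scales with the local gradient $\|\nabla\mathcal{L}_c(\theta_G^t)\|$. Under heterogeneity that local gradient does not vanish at $\theta_G^\star$; only the weighted sum $\nabla F(\theta_G^\star)$ does. Hence $\theta_G^\star$ is not a fixed point of the multi-epoch iteration. Shrinking $\eta$ reduces the resulting bias to $\mathcal{O}(\eta)$ but cannot absorb it into the contraction. Already for two scalar quadratics with different curvatures and two local steps, the fixed point moves off $\theta_G^\star$. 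This is exactly the FedAvg steady-state error the paper itself cites. The effective step also becomes $\eta E$, so the rate and the step-size condition change.

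\emph{Drifting weights.} Weights whose drift $\sum_t\|w^t-w\|$ is merely summable give convergence, but not the geometric rate $(1-\mu\eta)^t$ unless $\|w^t-w\|$ itself decays geometrically.

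The clean statement is the one you actually prove: one exact gradient step per client, fixed normalized weights, and $\sigma=0$ for an exact linear rate. State these as explicit hypotheses; the paper assumes them implicitly.
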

\begin{proof}

Let us assume full client participation i.e. all $C$ clients are selected per round and similarity scores $s_c^t$ is non-negative and bounded as $\sum_{c}s_c^t = 1$ for each round $t$. For brevity, let us define the similarity-weighted global objective as: 
\[F(\theta) := \sum_c s_c \;\mathcal{L}_c(\theta)\] 
for appropriate fixed weights ${s_c}$. The unique minimizer $\theta_G^\star$ is defined as:  \[\theta_G^\star:=\arg\min_{\theta}F(\theta)\] 

At each round, the server computes the new global model (using Eq. (\ref{eq:global_update})) and formally, we can interpret this update as a gradient descent step on $F$, up to the influence of the peer-regularization. In particular, excluding the regularizers, we have on each round:

\[\theta_G^{t+1} = \theta_G^t - \eta \sum_c s_c^t \nabla \mathcal{L}_c(\theta^t_G) \approx \theta^t_G - \eta \;\nabla F(\theta_G^t)\] 
because $\nabla F(\theta) = \sum_{c}{s}_c\,\nabla \mathcal{L}_c(\theta)$. If $s_c^t$ varies with $t$, each step still follows the aggregated gradient of the objective; for simplicity we retain $s_c^t$ constant, since the analysis extends similarly as long as weights are bounded.

The client peer-regularized local loss can be written as
\begin{equation}
	\label{eq:peer_regularized_loss}
	\tilde{\mathcal{L}}_c^t(\theta) = \mathcal{L}_c(\theta) + R_c^t(\theta),
\end{equation} 
where $R_c^t$ is a regularizer term that penalizes discrepancy from peer models (via $\bar{\theta}_c^t$ defined in Eq. (\ref{eq:avg_peer_model})) or from the global model $\theta_G^t$. The regularizer $R_c^t$ only makes $\tilde{\mathcal{L}}_c^t$ more strongly convex. For example, in the convergence analysis of FedProx \cite{fedprox_li2020federated} $\tilde{\mathcal{L}}_c^t(\theta)$ mentioned in Eq. (\ref{eq:peer_regularized_loss}) is set as: 
\[R_c^t(\theta)=\frac{\lambda}{2}|\theta-\theta_G^t|^2,\] 
whose gradient at $\theta=\theta_G^t$ is 0. However, in \myfls, regularizer $R_c^t(\theta)$ does not vanish at $\theta = \theta_G^t$ due to the peer term ($\bar{\theta}_c^t$), we can control its influence under reasonable assumptions so that it is small enough to preserve descent guarantees in the global convergence analysis. We argue that the total regularizer gradient $\nabla R_c^t(\theta)$ can still be treated as a bounded perturbation to the local optimization path. Specifically, the gradient remains small in magnitude if $\bar{\theta}_i^t$ is close to $\theta_G^t$ and similarity-based filtering ensures that $\bar{\theta}_i^t$ is an average over similar clients with relatively aligned models. 

In \myfls, at server level, the client updates are aggregated as: 
\[\Delta^t_G = \sum_c w^t_c \Delta^t_c, \quad
\text{ where } w_c^t= \frac{\mathfrak{s}^t_c}{\sum_j}\mathfrak{s}^t_j \] 
and $j$ is the index for all participating clients. 

Therefore, at the start of each round $t$ i.e, when $\theta=\theta_G^t$, the update direction 
\[\Delta_c^t = -\eta,\nabla \tilde{\mathcal{L}}_c^t(\theta_G^t)\] can be approximated as \[\Delta_c^t \approx -\eta,\nabla \mathcal{L}_c(\theta_G^t)\] since $\nabla R_c^t(\theta_G^t)=0$ or is negligibly small. In other words, the similarity weighting and peer regularization do not hinder the descent direction; they effectively keep client updates closer to the true global gradient, mitigating drift. We can thus analyze the global iteration as a gradient descent on $F(\theta)$. 

Given that $F$ is $\mu$-strongly convex and has Lipschitz gradient $L$, it is well-known that gradient descent enjoys a linear convergence rate \cite{fedprox_li2020federated}. Fore brevity, Let $g^t := \nabla F(\theta_G^t) = \sum_c s_c\,\nabla \mathcal{L}_c(\theta_G^t)$ be the true global gradient. By $L$-smoothness assumption on $F$ and the taylor expansion, we have:
\begin{equation}
	F(\theta_G^{t+1} \le F(\theta_G^t) + \langle \nabla F(\theta^t_G), \theta^{t+1}_G - \theta_G^t \rangle + \frac{L}{2} \|\theta_G^{t+1} - \theta_G^t\|^2
\end{equation}  
Substituting $\theta_G^{t+1} = \theta_G^t - \eta\;g^t$, we get
\begin{equation}\label{eq:ineq_ft1_t}
	F(\theta_G^{t+1}) \le F(\theta_G^t) - \eta \|g^t\|^2 + \frac{L\eta^2}{2} \|g^t\|^2.
\end{equation}
For $\eta \le 1/L$ Eq. (\ref{eq:ineq_ft1_t}) simplifies to
\begin{equation} \label{eq:simplified_each_round}
F(\theta_G^{t+1}) \le F(\theta_G^t) - \frac{\eta}{2}\,\|g^t\|^2	
\end{equation}
Since $F$ is $\mu$-strongly convex, we use the definition 
\[ F(\theta_G^\star) \ge F(\theta_G^t) + \langle \nabla F(\theta_G^t), \theta^\star_G - \theta^t_G \rangle\ + \frac{\mu}{2} \|\theta_G^\star - \theta_G^t|^2 \]
Rewriting, 
\[ F(\theta_G^t) - F(\theta_G^\star) \le \langle \nabla F(\theta_G^t), \theta^t_G - \theta^\star_G \rangle\ - \frac{\mu}{2} \|\theta_G^\star - \theta_G^t\|^2 \]
By definition, 
\[\langle \nabla F(\theta_G^t), \theta^t_G - \theta^\star_G \rangle \le \|\nabla F(\theta^t_G)\| \cdot \|\theta_G^t - \theta^\star_G\|\]
Therefore, 
\begin{equation} \label{eq:in_prod} 
	F(\theta_G^t) - F(\theta_G^\star) \le \|\nabla F(\theta^t_G)\| \cdot \|\theta_G^t - \theta^\star_G\| - \frac{\mu}{2} \|\theta_G^\star - \theta_G^t\|^2
\end{equation}
We assume $F$ is $L$-smooth (Assumption 1), therefore:
\[ \|\nabla F(\theta^t_G) - \nabla F(\theta_G^\star)| \le L \|\theta_G^t -\theta_G^\star\|\]
But since $\theta^\star_G$ is a minimizer, $\nabla F(\theta^\star_G) = 0$, so:
\begin{equation}\label{eq:nab_f_theta}
	\|\nabla F(\theta^t_G)\|\le L\|\theta_G^t-\theta_G^\star\| \implies \|\theta_G^t-\theta_G^\star\| \ge \frac{\|\nabla F(\theta^t_G)\|}{L}
\end{equation}
Using Eq. (\ref{eq:nab_f_theta}) in Eq. (\ref{eq:in_prod}) we get,
\begin{equation}
\begin{aligned}
	F(\theta_G^t) - F(\theta^*) 
	&\leq \|g^t\| \cdot \frac{\|g^t\|}{L} - \frac{\mu}{2} \left(\frac{\|g^t\|}{L}\right)^2 \\
	&= \frac{\|g^t\|^2}{L} - \frac{\mu}{2} \cdot \frac{\|g^t\|^2}{L^2} \\
	&= \|g^t\|^2 \left( \frac{1}{L} - \frac{\mu}{2L^2} \right)
\end{aligned}
\end{equation}

Invert this inequality to get a lower bound on $\|g^t\|^2$:
\begin{equation}
g^t\|^2 \geq \frac{F(\theta_G^t) - F(\theta^*)}{\left(\frac{1}{L} - \frac{\mu}{2L^2}\right)} = \left(\frac{L^2}{L - \mu/2}\right) \left(F(\theta_G^t) - F(\theta^*)\right)
\end{equation}
\[\|g^t\|^2 = \|\nabla F(\theta_G^t)\|^2 \ge 2\mu(F(\theta_G^t)- F(\theta_G^\star))\] 
Plug this in Eq. (\ref{eq:simplified_each_round}), we get, 
\[F(\theta^{t+1}_G) \le F(\theta_G^t) - \eta\mu (F(\theta^t_G) - F(\theta_G^\star))\]
On further simplification, we get, 
\[F(\theta^{t+1}_G) -F(\theta_G^\star) \le(1-\eta\mu) F(\theta_G^t) -F(\theta_G^\star))\]
This linear convergence in the equation above proves that each round reduces the optimality gap by a factor of $(1 - \eta \mu)$. Since $0 < \eta \mu < 1$, this optimality further shrinks. 
By induction, $\forall t$, 
\[F(\theta^{t}_G) -F(\theta_G^\star) \le(1-\eta\mu)^t F(\theta_G^0) -F(\theta_G^\star))\]
Now, from Assumption 2, 
\[F(\theta^{t}_G) -F(\theta_G^\star) \ge \frac{\mu}{2}\|\theta_G^t - \theta_G^\star\|^2\]
\[\frac{\mu}{2}\|\theta_G^t - \theta_G^\star\|^2 \le (1-\eta\mu)^t (F(\theta_G^0) -F(\theta_G^\star))\]
Rewriting, 
\[\|\theta_G^t - \theta_G^\star\|^2 \le \frac{2}{\mu}(1-\eta\mu)^t (F(\theta_G^0) -F(\theta_G^\star))\]
Since $F(\theta_G^0) - F(\theta_G^\star) \le \frac{L}{2} |\theta_G^0 - \theta_G^\star|^2$ ($L$-smoothness assumption), we can upper bound as: 
\[\|\theta_G^t - \theta_G^\star\|^2 \le (1-\eta\mu)^t (\theta_G^0 - \theta_G^\star)\]
which implies the parametric error $|\theta_G^t - \theta^\star_G|^2$ contracts linearly and thus establishing linear convergence.

The peer-regularization term $(R_c^t(\theta))$ assists convergence by reducing client-drift and effectively increasing the strong convexity of local objectives. The above result shows SAPE-FL converges to the exact minimizer $\theta^\star_G$ when using an appropriate constant $\eta$ or a decaying learning-rate schedule. In contrast, as mentioned in \cite{li2019convergence}, FedAvg on heterogeneous data may fail to reach the exact optimum unless $\eta$ is decayed, incurring an $O(\eta)$ steady-state error if $\eta$ is fixed. The similarity-weighted aggregation and peer regularization in SAPE-FL overcomes this issue by focusing updates on similar clients and coupling their objectives, thereby stabilizing and accelerating convergence.
\end{proof}

\begin{theorem}[Global Convergence under Non-Convexity]
Let the similarity-weighted aggregated objective is defined as $F(\theta^t_G):=\sum_c s_c^t \mathcal{L}_c(\theta^t_G)$. Suppose the stochasticity in local updates is bounded and  gradients are uniformly bounded $|\nabla \mathcal{L}_c(\theta)| \le G$ as per  Assumption \ref{assum:grad_bound}. For a fixed small stepsize $\eta \le 1/L$, after $T$ rounds \myfls satisfies:
\[ \min_{0\le t< T} \mathbb{E}\left[ \|\nabla F(\theta_G^t)\|^2 \right] \le \frac{2(F(\theta_G^0)- F_{\inf})}{\eta T} + \eta L \sigma^2,\]
where $F_{\inf}$ is a lower bound on $F$ and $\sigma^2$ is some finite variance. \myfls converges to a stationary point in expectation as $T\to\infty$.  Equivalently, \myfls finds an $\epsilon$-approximate stationary point i.e. $\mathbb{E}|\nabla F(\theta)|\le \epsilon$ in $\mathcal{O}(1/\epsilon^2)$ rounds in the worst case.
\end{theorem}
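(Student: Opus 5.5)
The plan is to treat one round of \myfls as a single stochastic gradient step on the similarity-weighted objective $F$, and then run the standard non-convex SGD descent-and-telescope argument. As in the strongly convex theorem, I will fix the weights $s_c$ with $\sum_c s_c = 1$; they are non-negative and bounded. Under that normalization $F$ inherits $L$-smoothness from Assumption \ref{assum:smooth}, since a convex combination of $L$-smooth functions is $L$-smooth.

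I will model the aggregated update as $\theta_G^{t+1}-\theta_G^t = -\eta\, \hat g^t$, where $\hat g^t = \sum_c s_c \nabla \mathcal{L}_c(\theta_G^t;\xi_c)$. This is the same first-order approximation of $\Delta_c^t$ used in the proof of Theorem \ref{thm:descent}, in which the regularizer gradient at $\theta_G^t$ is treated as negligible.

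The argument then proceeds in four steps.

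\textbf{Step 1: unbiasedness and variance.} Conditioned on $\theta_G^t$, I will show that $\mathbb{E}[\hat g^t] = \nabla F(\theta_G^t)$. By Assumption \ref{assum:grad_bound} and independence across clients, the variance satisfies
\[
\mathbb{E}\|\hat g^t - \nabla F(\theta_G^t)\|^2 \le \sum_c s_c^2 \sigma^2 \le \sigma^2 .
\]

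\textbf{Step 2: one-round descent.} Plugging the update into the $L$-smoothness inequality, exactly as in Theorem \ref{thm:descent}, and taking conditional expectation gives
\[
\mathbb{E}F(\theta_G^{t+1}) \le F(\theta_G^t) - \eta\|\nabla F(\theta_G^t)\|^2 + \frac{L\eta^2}{2}\bigl(\|\nabla F(\theta_G^t)\|^2 + \sigma^2\bigr).
\]
Using $\eta \le 1/L$, so that $1 - L\eta/2 \ge 1/2$, this becomes
\[
\mathbb{E}F(\theta_G^{t+1}) \le F(\theta_G^t) - \frac{\eta}{2}\|\nabla F(\theta_G^t)\|^2 + \frac{L\eta^2\sigma^2}{2}.
\]

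\textbf{Step 3: telescoping.} I will take total expectation, sum over $t = 0,\dots,T-1$, lower bound $F(\theta_G^T)$ by $F_{\inf}$, and divide by $\eta T/2$. This yields
\[
\frac{1}{T}\sum_{t<T} \mathbb{E}\|\nabla F(\theta_G^t)\|^2 \le \frac{2(F(\theta_G^0)-F_{\inf})}{\eta T} + \eta L\sigma^2 .
\]
Bounding the minimum over $t$ by the average gives the stated inequality.

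\textbf{Step 4: rate.} Choosing $\eta = \min\{1/L,\ \epsilon^2/(2L\sigma^2)\}$ makes the variance term at most $\epsilon^2/2$. Taking $T = \mathcal{O}(1/(\eta\epsilon^2))$ makes the first term at most $\epsilon^2/2$. This gives $\min_t \mathbb{E}\|\nabla F(\theta_G^t)\|^2 \le \epsilon^2$, and Jensen's inequality then gives $\mathbb{E}\|\nabla F\| \le \epsilon$. In the low-noise regime, where $\eta = 1/L$, this is $\mathcal{O}(1/\epsilon^2)$ rounds. In the general noisy regime the count degrades, and I would state the $\mathcal{O}(1/\epsilon^2)$ claim with that caveat, or absorb $\sigma^2$ into the constant.

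\textbf{Main obstacle.} The hard part is justifying Step 1 for the actual algorithm rather than its one-step idealization. There are three discrepancies: $\Delta_c^t$ comes from $E$ local epochs and not one gradient; the weights $s_c^t$ vary with $t$ and depend on the iterates; and the peer and global regularizers add gradient terms.

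My first approach would be to write
\[
\Delta_c^t = -\eta\bigl(\nabla\mathcal{L}_c(\theta_G^t;\xi) + b_c^t\bigr)
\]
and bound the bias $\|b_c^t\|$. The local-drift part would be controlled via $L$-smoothness together with the uniform gradient bound $G$, giving a bias of order $\eta L E G$. The regularizer part would be bounded by a constant, since $\mathcal{S}_c^P, \mathcal{S}_c^G \in [0,1]$. This bias adds an $\mathcal{O}(\eta)$ term that can be merged into the $\eta L \sigma^2$ term by redefining $\sigma^2$. That redefinition is exactly the freedom the phrase ``some finite variance'' in the statement allows.

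For time-varying weights, I would either freeze $s_c$, as the strongly convex proof does, or require $s_c^t$ to be measurable with respect to the past and bound $|F_{t+1} - F_t|$ by a summable term.
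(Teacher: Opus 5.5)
Your Steps 1--3 follow the paper's proof. The paper defines $\tilde g^t=-\frac{1}{\eta}\sum_c\mathfrak{s}_c^t\Delta_c^t$, postulates that it is unbiased for $\nabla F(\theta_G^t)$ with variance at most $\sigma^2$, and then runs the same chain: smoothness, $\mathbb{E}\|\tilde g^t\|^2\le\|\nabla F\|^2+\sigma^2$, $\eta\le 1/L$, telescoping, and minimum $\le$ average. You go further than the paper in two places. You derive the variance bound from Assumption~\ref{assum:grad_bound} rather than postulating it. You also freeze the weights so that the telescoping is legitimate, whereas the paper telescopes a time-varying $F$ without comment.

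The problems are in your add-ons. The first is the rate. For $\sigma>0$ your Step~4 correctly gives $T=\mathcal{O}\bigl(L(F(\theta_G^0)-F_{\inf})(\epsilon^{-2}+\sigma^2\epsilon^{-4})\bigr)$. However, your fallback of absorbing $\sigma^2$ into the constant does not recover $\mathcal{O}(1/\epsilon^2)$: it still leaves $\epsilon^{-4}$, and no choice of $\eta$ in the displayed bound does better. The paper's own $\mathcal{O}(1/\epsilon^2)$ comes from an algebra slip. It sets $\eta=1/\sqrt{T}$, obtains $\mathcal{O}(1/\sqrt{T})$, and asserts that $\mathcal{O}(1/\sqrt{T})\le\epsilon^2$ gives $T=\mathcal{O}(1/\epsilon^2)$; in fact it gives $T=\mathcal{O}(1/\epsilon^4)$. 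So the last sentence of the statement is justified only for the squared target $\min_t\mathbb{E}\|\nabla F(\theta_G^t)\|^2\le\epsilon$, or when $\sigma^2\lesssim\epsilon^2$. Prove it in that form instead of absorbing constants.

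The second problem is the bias patch. The drift part is fine: a bias of size $\eta LEG$ costs at most $\eta G\cdot\eta LEG$ per round. That is $2\eta LEG^2$ after dividing by $\eta T/2$, which merges (up to constants) as $\sigma^2\mapsto\sigma^2+2EG^2$.

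A regularizer bias bounded by an $\eta$-independent constant $B$ behaves differently. It costs $\eta GB$ per round, so it leaves an additive floor $2GB$ in the final bound, not an $\mathcal{O}(\eta)$ term. Hiding it in $\eta L\sigma^2$ would require $\sigma^2\propto 1/\eta$, which destroys the conclusion that the gradient vanishes. In addition, $\mathcal{S}_c^P,\mathcal{S}_c^G\in[0,1]$ says nothing about the size of their gradients.

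You do not need this step at all. In Algorithm~\ref{alg:client}, $\Delta_c^t=\tilde\theta_c^t-\theta_G^t$ comes from the unregularized $E$-epoch training of $\tilde\theta_c^t$. The regularized $\argmin$ only produces the personalized $\theta_c^{t+1}$, which affects $\theta_G$ at most through the next round's similarity weights. So drop the regularizer term and control only local drift and the weights.

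On the weights, note that the scores are computed after local training. If $S_{cj}^t$ is evaluated at the freshly trained $\tilde\theta_c^t$, it is not measurable with respect to the past and is correlated with the same samples as $\Delta_c^t$. In that case freezing the weights is the safe choice.
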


\begin{proof}

Without convexity, we cannot guarantee convergence to a global minimum, but we can show the global model descends toward a stationary point where the gradient is zero. The proof leverages the $L$-smoothness of each $\mathcal{L}_c$ and the fact that the global update is a weighted gradient step. We again interpret the update as (an approximate) gradient descent on the instantaneous aggregated loss:
\[F(\theta) = \sum_c \mathfrak{s}_c^t \mathcal{L}_c(\theta)\]
For notational simplicity, let us again define $g^t$ as:  
\[g^t:= \nabla F(\theta_G^t) = \sum_c \mathfrak{s}_c^t \nabla \mathcal{L}_c(\theta_G^t)\]
Also let us define $\tilde{g}^t$ as
\[\tilde{g}^t := -\frac{1}{\eta}\sum_c \mathfrak{s}_c^t \Delta_c^t\]
be the effective global gradient that the server uses (so ideally $\tilde{g}^t \approx g^t$). The peer regularization again helps align $\tilde{g}^t$ with $g^t$ by discouraging any client from drifting too far from the global model or its peers. We can regard $\tilde{g}^t$ as an unbiased estimator of $g^t$ (since $\mathbb{E}[\Delta_c^t] \approx -\eta \nabla \mathcal{L}_c(\theta_G^t)$ and assuming full client participation and unbiased local gradients). Therefore, $\mathbb{E}[\tilde{g}^t \mid \theta_G^t] = g^t$, and the variance between them is bounded as (Assumption \ref{assum:grad_bound}):
\[\mathbb{E}[|\tilde{g}^t - g^t|^2] \le \sigma^2\]
Using $L$-smoothness of $F$ and for any $\eta \le 1/L$:
\begin{equation*}
	\footnotesize
		\begin{aligned}
		\mathbb{E}\big[F(\theta_G^{t+1}) \mid \theta_G^t\big] &\le F(\theta_G^t) + \mathbb{E}\big[\langle \nabla F(\theta_G^t),\theta_G^{t+1}-\theta_G^t \rangle \mid \theta_G^t\big] \cdots \\ 
		& \qquad \cdots + \frac{L}{2}\mathbb{E}\big[\|\theta_G^{t+1}-\theta_G^t\|^2 \mid \theta_G^t\big] \\
		&= F(\theta_G^t) - \eta,\langle g^t,\mathbb{E}[\tilde{g}^t \mid \theta_G^t]\rangle \cdots \\ & \qquad \cdots + \frac{L\eta^2}{2}\mathbb{E}\big[\|\tilde{g}^t\|^2 \mid \theta_G^t\big] \\
		&= F(\theta_G^t) - \eta\,\|g^t\|^2 + \frac{L\eta^2}{2},\mathbb{E}\big[\|\tilde{g}^t\|^2 \mid \theta_G^t\big]
	\end{aligned}
\end{equation*}
Since $\mathbb{E}|\tilde{g}^t|^2 = |g^t|^2 + \mathbb{E}|\tilde{g}^t - g^t|^2 \le |g^t|^2 + \sigma^2$, we get:
\[\mathbb{E}[F(\theta_{G}^{t+1})] \le \mathbb{E}[F(\theta_{G}^{t})] - \frac{\eta}{2}\mathbb{E}\left[\|g^t\|^2\right] + \frac{L\eta^2}{2} \sigma^2\]

Rearrange this to obtain a lower bound on $\mathbb{E}[|g^t|^2]$:
\begin{equation}\label{eq:gt_descent}
\frac{\eta}{2}\,\mathbb{E}\!\big[\|g^t\|^2\big] \;\le\; \mathbb{E}\big[F(\theta_G^t)\big] - \mathbb{E}\big[F(\theta_G^{t+1})\big] + \frac{L\eta^2}{2}\,\sigma^2	
\end{equation}

Given that $F(\theta_G^t)$ is a convex combination of the ${\mathcal{L}_c}$ at $\theta_G^t$. Since all $\mathcal{L}_c$ are bounded, we assume lower bounds on $F$ as $F_{\inf}>-\infty$. Moreover, the descent condition in Eq. (\ref{eq:gt_descent}) indicates that if the gradient norm $\|g^t\|$ stays away from 0, the term $F^t(\theta_G^t) - F^t(\theta_G^{t+1})$ will be significantly positive, meaning the global loss decreases by a non-trivial amount. Summing Eq. (\ref{eq:gt_descent}) over $t=1$ to $T$:
\begin{equation}
	\label{eq:sum_gt_descent}
	\begin{aligned}
	\frac{\eta}{2} \sum_{t=0}^{T-1} \mathbb{E}\left[ \|g^t\|^2\right] \le & \mathbb{E}\left[F(\theta_G^0)\right] - \mathbb{E}\left[F(\theta_G^{T-1})\right] + \frac{L\eta^2}{2} T \sigma^2\\
\end{aligned}
\end{equation}
Substituting $F(\theta_G^{T-1}) \ge F_{\inf}$ and dividing both sides by $\frac{\eta}{2}T$ gives:
\begin{equation}\label{sum_final_gt_descent}
\frac{1}{T}\sum_t\mathbb{E}\left[ \|\nabla F(\theta_G^t)\|^2 \right] \le \frac{2(F(\theta_G^0)- F_{\inf})}{\eta T} + \eta L \sigma^2
\end{equation}

This is exactly the stated bound. It implies that the minimum gradient norm (in expectation) over the first $T$ rounds is bounded by the same RHS (since the minimum is less than the average). In particular, if we set $\eta$ to a small constant or use a decaying $\eta$, the term $L\eta,\sigma^2$ can be made negligible, giving 
\[\min_{0\le t<T}\mathbb{E}\left[||\nabla F^t(\theta_G^t)||^2\right] \approx \frac{2(F(\theta_G^0)-F_{\inf})}{\eta T}\]

Thus, as $T\to\infty$, the minimum expected gradient norm approaches zero. Now, suppose we want the minimum expected gradient norm across all rounds to be less than or equal to $\epsilon$, then from Eq. (\ref{sum_final_gt_descent}),
\[\min_{0\le t<T}\mathbb{E}\left[\|\nabla F^t(\theta_G^t)\|^2\right] \le \frac{2(F(\theta_G^0)-F_{\inf})}{\eta T}+L\eta\sigma^2 \le\epsilon^2\]

In the above equation the first term in the RHS decreases as $1/T$, and the second term increases linearly with $\eta$, so choosing $\eta=\frac{1}{\sqrt{T}}$ balances the two terms, giving the following bounds. 

\[\min_{0\le t<T}\mathbb{E}\left[\|\nabla F^t(\theta_G^t)\|^2\right] \le \mathcal{O}\left(\frac{1}{\sqrt{T}}\right)\]

Solving $\mathcal{O}(1/\sqrt{T}) \le \epsilon^2$ gives $T=\mathcal{O}(1/\epsilon^2)$. Therefore, \myfls guarantees convergence to an $\epsilon$-stationary point within $\mathcal{O}(1/\epsilon^2)$ communication rounds. The above result shows that SAPE-FL, despite personalization, will not diverge on non-convex objectives; instead it will approach a stationary solution. In fact, the client selection in SAPE-FL can improve the convergence stability by filtering out dissimilar updates and thus providing an additional safeguard against high-variance updates. 
\end{proof}

\subsection{Additional Experimental Details}

\subsubsection{Baselines}
Since FL litearture is fairly broad, we restrict our comparison specifically to methods that exhibit fundamental similarities and conceptual relevance. For the sake of completeness in our experiments, we have also compared the performance of the local-only trained models. We use the following baselines:

\begin{itemize}
	\item \textbf{FedAvg} \cite{fedavg_mcmahan2017communication}: The classic federated learning approach that aggregates client models by naively averaging their weights.
	\item \textbf{PerFedAvg} \cite{perfedavg_fallah2020personalized}: A meta-learning-based personalized federated learning method that optimizes an initial global model to facilitate effective local adaptation at each client.
	\item \textbf{FedProx} \cite{fedprox_li2020federated}: Enhances FedAvg by introducing a proximal term to mitigate local drift during training, thereby managing heterogeneity in federated settings.
	\item \textbf{DittoFL} \cite{ditto_fl}: Employs a personalized regularization term (analogous to proximal term in FedProx) that explicitly balances local fitting with adherence to the global model, offering a strong baseline for personalized federated learning.
	\item \textbf{FedACS} \cite{fedacs_chen2024personalized}: Introduces attention-based client selection and aggregation to provide personalized federated learning, considering similarity measured  between clients based on their model updates.
	
	\item \textbf{LCFed} \cite{zhang2025lcfed}: Proposes a lightweight client clustering framework that groups clients based on cosine similarity of their model updates to enable efficient and personalized federated training.
	
\end{itemize}

\subsubsection{Datasets} 
We evaluate the performance of our proposed \myfls framework on both synthetic and standard real-world multi-class classification problems using standard datasets used in the federated learning community.

\textbf{Synthetic Datasets} To simulate synthetic classification data, we utilize the \textit{make\_classification()} function from the scikit-learn library \cite{sklearn}, which generates a multi-class dataset by sampling from multivariate Gaussian distributions. Mathematically, for a specified number of classes $\mathcal{C}=10$, number of features $d=20$, number of informative features ($n_\text{info} = 0.75 \times d$) and total samples $N=10000$, we construct $\mathcal{C}$ Gaussian clusters in a feature space $\mathbb{R}^{d}$. Each cluster corresponds to a class label and is characterized by a randomly generated mean vector and a covariance structure that governs feature correlation. Informative features are those that directly influence class separation, while redundant features are created as linear combinations of the informative ones. Non-informative (noise) features are sampled independently from a Gaussian distribution. To add the statistical heterogeneity we randomly change the cluster of data points with probability = $0.4$ i.e., change the class labels during the data generation process and then split the dataset among all the clients. 

\textbf{Real-world Datasets}  We use the following publicly available classification datasets commonly utilized in federated learning literature.
\begin{itemize}
	\item \textit{Human Activity Recognition Datasets}: In this task the participants were engaged in distinct physical activities such as walking, sitting, and standing, which serve as the target class labels for classification. The raw input features are sourced from the smartphone’s built-in accelerometer and gyroscope, capturing tri-axial motion and orientation signals over time. The objective is to predict the human activity based on their accelerometer and Gyroscope data. We use publicly available HAR dataset \cite{har_dataset_2013} and AccGyro Dataset \cite{accgyro_dataset_2022} for predicting the human activity. 
	
	\item \textit{Image Classification Datasets}:  We evaluate our models on four widely used image classification datasets: MNIST, Fashion-MNIST (FMNIST), CIFAR-10, and CIFAR-100. MNIST consists of 70,000 grayscale images of handwritten digits $(0–9)$, each of size $28\times28$ pixels, serving as a foundational dataset for digit recognition tasks. EMNIST extends MNIST dataset by incorporating handwritten uppercase and lowercase letters, resulting in more complex variants with up to $62$ classes. Fashion-MNIST retains the $28×28$ grayscale format but replaces digits with images of clothing items from $10$ fashion categories, introducing greater visual diversity and classification difficulty. In contrast, CIFAR-10 and CIFAR-100 comprise $32\times32$ color (RGB) images with $10$ and $100$ classes, respectively, capturing objects such as animals, vehicles, and household items. Together, these datasets span a range of tasks from digit and character recognition to object and fine-grained image classification, enabling comprehensive evaluation across varying input modalities and classification granularities.

\end{itemize}

\subsubsection{Data Partitions} 

To simulate the statistical heterogeneity commonly observed in federated learning environments, we partition the global dataset among clients using a Dirichlet distribution-based sampling strategy. Specifically, we draw client-specific label distributions from a Dirichlet distribution parameterized by a concentration parameter $\kappa$, where lower values of $\kappa$ produce higher degrees of heterogeneity by encouraging label imbalance across clients. This setup ensures that each client possesses a unique and potentially skewed distribution of data, capturing the non-IID nature of federated settings. We set the concentration parameter $\kappa=0.3$ to systematically control the severity of data skew, enabling the evaluation of \myfls framework's robustness under varying levels of prior probability shift and quantity imbalance. 

\subsection{Parameters of \myfl} 
We have set the number of clients to $100$ in our experiments using synthetic, human activity recognition and image classification datasets. We split the data at each client in the $80-20\%$ train-test ratio. We have set the participants ratio $r$ to $70\%$, i.e., at each communication round $70\%$ of the clients participate in the federated learning process. We have set the influence factor mentioned in Eq. (\ref{eq:def_reg_strength}) to $\delta = 0.5$. We have implemented all the methods considered using Torch 2.7.0.

\subsubsection{Performance Metrics and Client Models} 
The empirical performance of our proposed and the competing methods is evaluated by measuring the mean testing accuracy i.e., the average (along with the standard deviation) test accuracies computed across all participating clients indicating overall classification performance. We also report average (along with the standard deviation) AUC scores, a robust measure particularly valuable in scenarios with imbalanced data distributions, assessing the model's capability to distinguish between different classes. Although our proposed method is applicable across different model architectures, for synthetic data and human activity recognition (HAR) tasks, we employ a five-layer multi-layer perceptron (MLP) consisting of an input layer followed by three hidden layers of sizes 1024, 512, and 256, each with ReLU activation, and a final output layer matching the task-specific number of classes. For classification tasks with image datasets such as CIFAR-10, MNIST, EMNIST, and FEMNIST, we utilize lightweight Convolutional Neural Network (CNN) architectures composed of two convolutional layers, followed by max-pooling, and two fully connected layers.


\subsubsection{Numerical Stability in Peer and Global Anchoring}

Due to the use of similarity thresholding and nonnegativity truncation in the computation of peer similarities (via $\nu$), it is possible that, for a given client $c$ at communication round $t$, no peer model satisfies the threshold criterion $S_{cj}^t > S_{\min}$. This situation may arise particularly in early training rounds, under severe data heterogeneity, or when client models are totally different. In such cases, the peer set becomes empty, resulting in a zero normalization constant in the peer-averaged model computation. To ensure numerical stability and maintain a well-defined optimization objective, we adopt a conservative fallback strategy by disabling the peer regularization term for that round, i.e., setting $S_c^P = 0$. This effectively removes peer influence while allowing the client to rely solely on the global anchor, thereby avoiding arbitrary parameter averaging and preserving stable personalization behavior.

Additionally, a similar case may arise during server-side aggregation when the aggregate similarity mass becomes zero, i.e., $\sum_{c} s_c^t = 0$, where $s_c^t$ denotes the cumulative similarity score associated with client $c$. This situation can occur when similarity scores are truncated to zero due to the non-negativity constraint imposed on the cosine similarity function, or when participating clients are highly dissimilar and fail to exhibit sufficient alignment with their peers. To ensure numerical stability and prevent arbitrary updates under such degenerate conditions, we adopt a conservative fallback strategy and skip the global model update for that round by setting $\theta_G^{t+1} \leftarrow \theta_G^t$. This approach avoids incorporating unreliable or uninformative updates while maintaining stable training dynamics, particularly in early rounds or under extreme data heterogeneity.

\subsubsection{Scalability of \myfl}

We would like to clarify that while the formulation in Eq. (8) defines peer-averaging over all clients, however in practice each client interacts only with a small sampled subset of peers (refer to step 2 of Algorithm 1) rather than all $N$ clients, keeping the complexity $\mathcal{O}(C)$ where $C\ll N$. The additional computation arises from a few forward passes of peer models on local mini-batches, which is minor compared to local training. In our experiments, this design yields $3-5\%$ accuracy and $2-4\%$ AUC improvements, demonstrating a favorable accuracy-to-cost trade-off. In contrast to FedProx, which uses a single global anchor, SAPE-FL adaptively exploits meaningful peer information, reducing variance and negative transfer.

\subsubsection{Cosine Similarity based Scores} 

We have adopted a modified (clipped) cosine similarity for comparisons with the anchor models, because it provides a scale-invariant measure of alignment that remains numerically stable across heterogeneous clients. Unlike Euclidean or kernel-based distances, cosine similarity focuses on directional coherence rather than magnitude, making it robust to local learning-rate or norm differences common in FL. Using cosine similarity measure for both weights and outputs also ensures consistent interpretation of alignment across structural and functional spaces, simplifying the adaptive weighting mechanism in Eq. (6). 

\section{Additional Experiments and Ablation Study}

\subsection{Evolution of Client Models Over Rounds}

To illustrate the effectiveness of our proposed framework in improving client-level performance over communication rounds, we conducted an additional experiment using the synthetic dataset described earlier. In this setup, we retain most experimental settings from the main manuscript, except we reduce the number of clients to $C=20$ and the number of communication rounds to $100$. Figure \ref{fig:client_accuracy_trajectory} shows the accuracy trajectories $5$ randomly selected clients. We observe a consistent upward trend in accuracy across all the clients, with most achieving stable performance after a sufficient number of communication rounds. These results demonstrate that \myfls framework enables steady local improvement by leveraging personalized and peer-regularized training, even in settings with increased client population and extended training duration.

\begin{figure}[t]
	\centering
	\includegraphics[width=0.95\columnwidth]{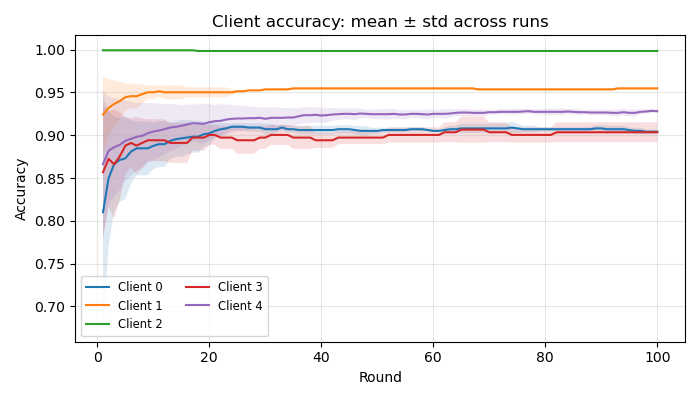}
	\caption{\textbf{Client Accuracy over Communication Rounds.} Accuracy trajectories for a random set of clients using the synthetic dataset over $100$ communication rounds. Shaded regions represent one standard deviation across multiple runs.}
	
	\label{fig:client_accuracy_trajectory}
\end{figure}

\subsection{Uplifting Clients with Poor Performance}

To demonstrate how \myfls framework mitigates negative knowledge transfer from poorly performing clients while also helping them to improve their performance, we conducted an additional experiment retaining most of the hyperparameter settings mentioned in the main manuscript. In this experiment, for clarity in presenting the results, we compare accuracy gains only against the classical FedAvg algorithm. Additionally, in contrast to the default settings mentioned earlier, we have reduced the number of clients from $C=100$ to $C=20$ and assumed a full participation i.e., $r=1$. This experiment follows a three-stage evaluation protocol. 

\begin{enumerate}
	\item \emph{Local-only baseline.}  
	Each client trained an MLP \emph{exclusively} on its own data $E=15$ epochs. The resulting test accuracy is denoted
	$\text{Acc}^{\text{local}}_c$.
	\item \emph{FedAvg baseline.}  
	We reset all models to the common global initialisation and executed
	vanilla FedAvg for the same number of communication rounds used by
	\myfls.  The final accuracy is denoted as $\text{Acc}^{\text{FA}}_c$.
	\item \emph{\myfls}  
	Starting again from the common initialisation, we ran our full
	similarity–aware, dual-anchored algorithm and obtained
	accuracy $\text{Acc}^{\text{SAPE}}_c$.
\end{enumerate}

For every client $c$ we then compute the \emph{accuracy gains}:
\[\Delta^{\text{FA}}_c = \text{Acc}^{\text{FA}}_c - \text{Acc}^{\text{local}}_c  \]
\[\Delta^{\text{SAPE}}_c = \text{Acc}^{\text{SAPE}}_c -
\text{Acc}^{\text{local}}_c\]

Next we select the bottom $25\%$ of clients (poor-performing) according to $\text{Acc}^{\text{local}}_c$ and show the margin by which these clients get improved via \myfls framework. In Figure \ref{fig:accuracy_gains} we plots these accuracy gains. Figure \ref{fig:accuracy_gains} depicts the accuracy gains in one instance of the the experiment where clients $C1, C5, C4, C13, C10$ are the $5$ clients (25\% out of all 20 clients) that are most affected by data heterogeneity i.e., clients with poor predictive models. From the plot it is evident that the margin of gain using \myfls (orange) framework outperforms classicial FedAvg (blue) algorithm. Therefore these empirical results confirm that the similarity-weighted global update together with the peer-anchored regularizer delivers pronounced benefits in avoiding negative knowledge transfer. 

\begin{figure}[t]
	\centering
	\includegraphics[width=0.8\columnwidth]{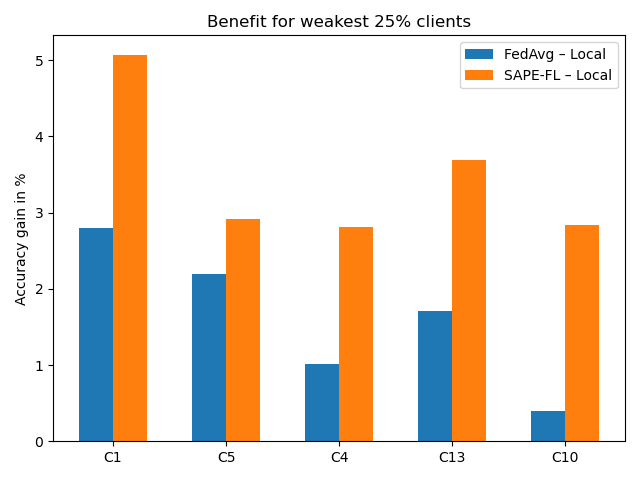}
	\caption{\textbf{Accuracy Gain of Weak Clients.} Comparison of local accuracy improvements achieved by \myfls and FedAvg for the lowest-performing $25\%$ of total clients ($C=20$). The bar chart shows the per-client accuracy gain for those $5$ selected clients.}
	\label{fig:accuracy_gains}
\end{figure}

\subsection{Influence Factor $\delta$ and Similarity Scores $S$}

To investigate the relative contribution of output similarity and weight similarity in our proposed similarity measure mentioned in Eq. (\ref{eq:def_reg_strength}), we conducted an ablation study by varying the influence factor $\delta \in \{0, 0.25, 0.5, 0.75, 1\}$. These values determine the extent to which the final similarity score depends on output-based versus weight-based alignment, with $\delta = 1$ using only output similarity and $\delta = 0$ relying solely on weight similarity. The experimental settings were kept identical to those used in our main experiments as mentioned in the main manuscript. The experimental results are mentioned in Table \ref{tab:ablation_delta}. Empirically, we observed that higher values of $\delta$ consistently led to improved performance across datasets. This aligns with our hypothesis that output similarity is a more reliable indicator of task alignment, as it reflects functional behavior even when client models converge to different but equally good local optima in heterogeneous settings. In contrast, relying solely on weight similarity still provides a useful approximation but can be relatively less effective due to the potential misalignment of model parameters that does not necessarily reflect true functional similarity. 

Additionally, we intentionally use the same tunable influence factor to maintain consistency
and stability in the relative weighting of output-space and weight-space similarities across anchors. This shared $\delta$ ensures that both peer and global similarities are measured on a comparable scale, avoiding conflicting gradient directions that can arise from independently tuned factors. Our empirical results show that a single factor suffices to balance structural and functional alignment for
both anchors. While distinct $\delta$ values for peer and global anchors could be explored in future work, the current unified formulation already provides stable results across datasets without additional tuning burden.

\begin{table*}[h]
	\centering
	\caption{\textbf{Ablation Study on Influence Factor $\delta$}. Mean test accuracies of the \myfls framework under varying contributions of output similarity and weight similarity in the computation of the similarity score. Each cell represents the mean accuracy across all clients at the end of $200$ communication rounds. The best-performing result in each row is highlighted in bold.}\label{tab:ablation_delta}
	\begin{tabular}{|c|c|c|c|c|c|}
		\toprule
		\textbf{Dataset} & \textbf{$\delta=0$} & \textbf{$\delta=0.25$} & \textbf{$\delta=0.5$}&\textbf{$\delta = 0.75$} & \textbf{$\delta=1$} \\
		\midrule
		\textbf{Synthetic} &$86.30\pm0.66$&$86.72\pm0.61$&$87.49\pm0.75$&$\mathbf{89.12\pm0.54}$&$85.07\pm0.27$ \\
		\textbf{HAR} &$78.14\pm0.26$&$79.06\pm0.41$&$81.44\pm0.78$&$\mathbf{81.74\pm0.63}$&$77.32\pm0.88$ \\
		\textbf{MNIST} &$80.05\pm0.12$&$80.82\pm0.09$&$\mathbf{81.02\pm0.33}$& $80.35\pm0.24$&$78.77\pm0.69$\\
		\textbf{EMNIST} &$73.29\pm0.73$ & $73.06\pm0.22$&$74.17\pm0.68$& $\mathbf{75.39\pm0.15}$ &$72.94\pm0.66$ \\
		\bottomrule
	\end{tabular}
\end{table*}

\subsection{Impact of Data Heterogeneity $\kappa$}

To evaluate the robustness of \myfls under varying degrees of statistical heterogeneity, we conducted additional experiments by controlling the Dirichlet concentration parameter $\kappa \in \{0.2, 0.3, 0.4, 0.5, 0.6, 1.0\}$, where smaller values of $\kappa$ correspond to more severe data skew across clients. As shown in Table~\ref{tab:heterogeneity_results}, the performance is strongest at $\kappa = 0.3$ across all the considered datasets (Synthetic: $90.66\%$, HAR: $81.22\%$, AccGyro: $81.24\%$, MNIST: $80.98\%$), highlighting the effectiveness of the proposed similarity-aware personalization and aggregation mechanisms under non-IID conditions. As $\kappa$ increases and client distributions becomes less skewed, the performance decreases gradually (e.g., MNIST drops from $80.98\%$ at $\kappa=0.3$ to $78.82\%$ at $\kappa=0.6$), which is consistent with the reduced benefit of personalization when heterogeneity is weaker. Nevertheless, \myfls remains stable and competitive across all considered $\kappa$ values, confirming that it maintains robustness across varying levels of statistical heterogeneity.

\begin{table*}[h]
	\centering
	\caption{Performance (in accuracy) of \myfls under different levels of data heterogeneity controlled by the Dirichlet concentration parameter $\kappa$. Smaller $\kappa$ indicates more non-IID data distributions across clients. Accuracy scores in bold indicate the best and higher the better.}
	\label{tab:heterogeneity_results}
	\begin{tabular}{|c|c|c|c|c|c|c|}
		\toprule
		\textbf{Datasets} & $\mathbf{\kappa=0.2}$ & $\mathbf{\kappa=0.3}$ & $\mathbf{\kappa=0.4}$ & $\mathbf{\kappa=0.5}$ & $\mathbf{\kappa=0.6}$ & $\mathbf{\kappa=1}$\\
		\midrule
		\textbf{Synthetic} & $89.24\pm0.25$ & $\mathbf{90.66\pm0.87}$ & $87.12\pm0.44$ & $86.94\pm0.71$ & $86.56\pm0.22$ & $86.01\pm0.79$ \\
		\textbf{HAR} & $79.44\pm0.31$ & $\mathbf{81.22\pm0.56}$ & $80.91\pm0.40$ & $80.05\pm0.25$ & $79.87\pm0.66$ & $76.08\pm0.53$ \\
		\textbf{AccGyro} & $80.22\pm0.39$ & $\mathbf{81.24\pm0.11}$ & $80.81\pm0.68$ & $80.13\pm0.29$ & $79.61\pm0.18$ & $79.06\pm0.22$  \\
		\textbf{MNIST} & $80.07\pm0.60$ & $80.98 \pm0.17$ & $\mathbf{81.04\pm0.19}$ & $79.23\pm0.87$ & $78.82\pm0.05$ & $78.01\pm0.20$ \\
		\bottomrule
	\end{tabular}
\end{table*}

\begin{table*}[!h]
	\centering
	\caption{\textbf{Ablation Study on Threshold $S_{\text{min}}$}. Mean test accuracies of the \myfls framework under varying similarity threshold $S_{\text{min}}$. Each cell represents the mean accuracy across all clients at the end of $100$ communication rounds. The best-performing result in each row is highlighted in bold.}\label{tab:ablation_smin}
	\begin{tabular}{|c|c|c|c|c|}
		\toprule
		\textbf{Dataset} & \textbf{$S_{\text{min}}=0.2$} & \textbf{$S_{\text{min}}=0.4$} & \textbf{$S_{\text{min}}=0.6$}&\textbf{$S_{\text{min}}=0.8$} \\
		\midrule
		\textbf{Synthetic} &$87.21\pm0.66$&$88.14\pm0.55$&$\mathbf{88.63\pm0.49}$&$86.60\pm0.42$ \\
		\textbf{HAR} &$75.21\pm 0.39$&$77.08\pm0.65$&$\mathbf{78.94\pm0.0.77}$&$77.09\pm0.51$\\
		\textbf{MNIST} &$77.29\pm0.17$&$78.01\pm0.33$&${78.55\pm0.32}$& $\mathbf{79.69\pm0.38}$\\
		
		\bottomrule
	\end{tabular}
\end{table*}

\subsection{Effect of Similarity Threshold $S_{\text{min}}$}

The similarity threshold $S_{\text{min}}$ plays a critical role in filtering out dissimilar peers during the computation of the peer-averaged anchor in our \myfls framework. This threshold governs the inclusion of clients in the peer set, thereby directly influencing the regularization term that guides local optimization. A lower value of $S_{\text{min}}$ permits broader collaboration by including a wider range of clients, but at the risk of negative transfer from poorly aligned or noisy peers. Conversely, a higher value enforces stricter trust, ensuring only highly similar clients influence each other, but potentially resulting in under-utilization of peer support, especially in highly heterogeneous or sparse client distributions. We have conducted an ablation study to understand the effects of the similarity threshold required for constructing peer-averaged model. In this experiment we use the subset of datasets described earlier. We conducted experiments with a range of values $S_{\text{min}} \in \{0.2, 0.4, 0.6, 0.8\}$. As shown in Table~\ref{tab:ablation_smin}, we found that thresholds in the range of $[0.6, 0.8]$ consistently yielded better performance across datasets. This range effectively balances the trade-off between peer set size and quality, ensuring that each client aggregates from sufficiently aligned peers while still benefiting from diversity in collaboration. Lower thresholds (e.g., $0.2$ or $0.4$) tended to introduce noisy or weakly aligned peers, reducing the effectiveness of the regularization and leading to sub-optimal personalization.

\subsection{Limitations and Future Work}
While \myfls framework provides a structured and effective approach to personalization through similarity-aware regularization and aggregation, certain practical considerations arise in real-world deployments. First, computing pairwise similarities and maintaining peer sets can introduce an additional computational and communication burden, especially in large-scale deployments. However, this can be mitigated using suitable matrix approximation techniques such as low-rank factorization that reduce the computational overhead on each client. Next, although the framework is inherently designed to handle client heterogeneity, its performance relies on the presence of at least a few peers with meaningful alignment. In scenarios with highly disjoint or adversarial client distributions, we can possible think of adaptive peer filtering mechanisms or fallback strategies (defaulting to global model regularization) that can maintain personalization quality and stability.

\myfls framework demonstrated promising improvements over existing personalized federated learning approaches. However, there are several research directions open for future exploration. First, the current gradient similarity-based trust mechanism could be enhanced with a more nuanced similarity metric such as a Bregman divergence. Second, extending \myfls framework to support dynamic client participation, where clients may join or leave during training phase, could improve its applicability in real-world asynchronous FL deployments. Furthermore, integrating privacy-preserving techniques such as differential privacy or secure multi-party computation into the trust score computation remains an open challenge.

\end{document}